\def\etal{{\em et al.\ }}
\newcommand{\R}{\mathbb{R}}
\newcommand{\diag}{\text{Diag}}
\newcommand{\iFFT}{\mathcal{F}^{-1}} 
\newcommand{\FFT}{\mathcal{F}} 
\newcommand{\NM}[2]{\| #1 \|_{#2}}
\newcommand{\NML}[2]{\left\| #1 \right\|_{#2}}
\newcommand{\sumDZ}{\sum_{k=1}^{K}d_k \! * \! z_{ik}}
\newcommand{\sumDZfre}{\sum_{k=1}^{K}\hat{d}_k\odot \hat{z}_{ik}}
\newcommand{\Px}[2]{\text{prox}_{#1}( #2 ) }
\newtheorem{theorem}{Theorem}
\newtheorem{lemma}[theorem]{Lemma}
\newtheorem{proposition}[theorem]{Proposition}
\begin{document}
\title{General Convolutional Sparse Coding with Unknown Noise}

\author
{
	Yaqing~Wang,~\IEEEmembership{Member~IEEE,}	
	James~T.~Kwok,~\IEEEmembership{Fellow~IEEE,}
	and~Lionel~M.~Ni,~\IEEEmembership{Fellow~IEEE}% <-this % stops a space
	\thanks{Y. Wang, J. T. Kwok and L.M. Ni are with the Department of Computer Science and Engineering, Hong Kong University of Science and Technology University, Hong Kong.}
	\thanks{\copyright 2019 IEEE.  Personal use of this material is permitted.  Permission from IEEE must be obtained for all other uses, in any current or future media, including reprinting/republishing this material for advertising or promotional purposes, creating new collective works, for resale or redistribution to servers or lists, or reuse of any copyrighted component of this work in other works.}
}

\maketitle

\begin{abstract}
Convolutional sparse coding (CSC) can learn representative shift-invariant patterns from multiple kinds of data.
However, existing CSC methods can only model noises from Gaussian distribution, which is restrictive and unrealistic.
In this paper, we propose a general CSC model capable of dealing with complicated unknown noise. 
The noise is now modeled by Gaussian mixture model, which can approximate any continuous probability density function.  
We use the expectation-maximization algorithm to solve the problem and design an efficient method for the weighted CSC problem in maximization step. 
The crux is to speed up the convolution in the frequency domain while keeping the other computation involving weight matrix in the spatial domain.
Besides, we
simultaneously update the dictionary and codes by nonconvex accelerated proximal gradient algorithm without bringing in extra alternating loops.
The resultant method obtains comparable time and space complexity compared with existing CSC methods. 
Extensive experiments on synthetic and real noisy biomedical data sets validate that our method can model noise effectively and obtain high-quality filters and representation.
\end{abstract}

\begin{IEEEkeywords}
	Convolutional sparse coding, Noise modeling, Gaussian mixture model
\end{IEEEkeywords}

\section{Introduction}
\label{sec:intro}

\IEEEPARstart{G}{iven} a set of samples, 
sparse coding tries to 
learn an over-complete dictionary 
and then 
represent each sample as a sparse 
combination (code) of the dictionary atoms.
It has been used in various signal processing \cite{aharon2006rm,lee2007efficient} and computer vision applications \cite{mairal2009non,yang2009linear}.
Albeit its popularity, sparse coding cannot capture shifted local patterns
in the data. 
Hence, 
pre-processing (such as the extraction of sample patches) and post-processing (such as aggregating patch representation back into sample representation)
are needed, otherwise redundant representations 
will be learned. 

Convolutional sparse coding (CSC) \cite{zeiler2010deconvolutional} is a recent method which improves sparse coding by learning a shift-invariant
dictionary. This is done by replacing the 
multiplication between codes and dictionary by convolution operation, which can capture local patterns of shifted locations in the data.
Therefore, no pre-processing or post-processing are needed, and the sample can be optimized as a whole and represented as the sum of a set of filters from the dictionary
convolved with the corresponding codes. 
It has been successfully used on various data types,  including trajectories \cite{zhu2015convolutional}, images \cite{heide2015fast}, audios \cite{cogliati2016context}, videos \cite{choudhury2017consensus}, multi-spectral and light field images \cite{wang2018scsc} and biomedical data \cite{andilla2014sparse,sironi2015learning,chang2017unsupervised,jas2017learning}. 
It also succeeds on a variety of applications accompanying the data, such as 
recovering non-rigid structure from motion \cite{zhu2015convolutional},
image super resolution \cite{gu2015convolutional}, image denoising and
inpainting \cite{wang2018scsc}, 
music transcription \cite{cogliati2016context}, 
video deblurring \cite{choudhury2017consensus}, neuronal assembly detection \cite{peter2017sparse} and so on.

All above CSC works 
use square loss,
thus assume 
that the noise in the data is from Gaussian distribution.
However,
this can be restrictive and does not suit many real-world problems. 
For example, although
CSC is popularly used for biomedical data sets \cite{andilla2014sparse,chang2017unsupervised,jas2017learning,peter2017sparse} where shifting patterns abound due to cell division, it cannot handle the various complicated noises in the data.  
In fact, biomedical data sets
usually contain artifacts during recording,
e.g.,
biomedical heterogeneities,  large variations in luminance
and contrast,
and disturbance due to other small living animals
\cite{andilla2014sparse,hitziger2017adaptive}. 
Moreover, 
as the target biomedical structures
are often tiny and delicate, the existing of noises 
will heavily interfere the quality of
the learned filters and representation \cite{jas2017learning}.

Lots of algorithms have been proposed for CSC with square loss.
While the objective is not convex,
it is convex when codes of the dictionary are fixed. 
Thus, CSC is mainly solved by alternatively update 
the codes and dictionary 
by block coordinate descent (BCD) \cite{tseng2001convergence}. 
The difference of methods mainly lies in how to solve the subproblems (codes update or dictionary update) separately.
The pioneering work Deconvolutional network \cite{zeiler2010deconvolutional} uses
gradient descent for both subproblems. 
ConvCoD \cite{kavukcuoglu2010learning} uses stochastic gradient descent for dictionary update and additionally learns an encoder to output the codes. 
Recently,
other works 
\cite{heide2015fast,choudhury2017consensus,bristow2013fast,wohlberg2016efficient,sorel2016fast,papyan2017convolutional} use  
alternating direction method of multipliers (ADMM) \cite{boyd2011distributed}. 
ADMM is favored since it can decompose the subproblem into smaller ADMM subproblems which usually have closed-form solutions. 
The decomposition allows solving CSC by
separately performing faster convolution in frequency domain while enforcing the translation-variant constraints and regularizers in spatial domain.  
However, one needs another alternating loop between these ADMM subproblems so as to coordinate on the solution of the original subproblem.

Recently, 
there emerges one work which models the noise in CSC other than Gaussian.
Jas \etal 
\cite{jas2017learning} 
proposed 
the alpha-stable CSC 
($\alpha$CSC)
which 
models
the noise in 1D signals
by the symmetric alpha-stable distribution
\cite{mandelbrot1960pareto}.
This
distribution 
includes a range of heavy-tailed distributions, and is known to be more robust to noise and outliers. 
However, the 
probability density function 
of the 
alpha-stable distribution does not have an analytical form,
and its inference needs to be approximated 
by the Markov chain Monte Carlo (MCMC) \cite{gilks1995markov} procedure, which is known to be computationally expensive.
Moreover, 
as shown in Figure.~\ref{fig:probillus}, the alpha-stable distribution 
still restricts the noise to be of one particular type in advance, which is not appropriate due to unknown ground truth noise type.

\begin{figure}[ht]
	\centering
	\includegraphics[width=0.3\linewidth]{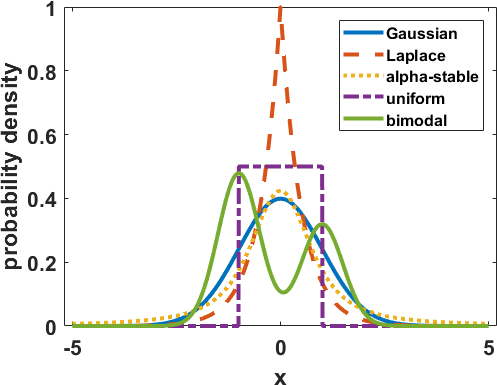}
	\vspace{-5px}
	\caption{Examples of popularly used distributions.}
	\label{fig:probillus}
\end{figure}

In this paper, we propose a general CSC model (GCSC) which enables CSC to deal with complicated unknown noise.
Specifically,
we model the noise in CSC by the Gaussian mixture model (GMM), which can approximate any continuous probability density function. 
The proposed model is then solved by the Expectation-Maximization algorithm (EM). However, the maximization step becomes a weighted variant of the CSC problem which cannot be efficiently solved by existing algorithms, e.g., BCD and ADMM, since they bring extra inner loops in M-step.
Besides, 
the weight matrix prevents us from solving the whole objective in the frequency domain to speed up the convolution.
In our proposed method, we develop a new solver to update the dictionary and codes together by a nonconvex accelerated proximal algorithm without alternating loops. 
Moreover, we manage to efficiently speed up the convolution in the frequency domain and calculate the part involving the weight matrix in the spatial domain.
The resultant algorithm achieves comparable time and space complexity compared with 
state-of-the-art CSC algorithms (for square loss).
Extensive experiments are performed on both synthetic data and real-world biomedical data
such as local field potential signals and retinal scan images. Results show that the proposed method can model the complex underlying data noise, and obtain high-quality filters and representations.

The rest of the paper is organized as follows. Section~\ref{sec:related_works}
briefly reviews CSC and proximal algorithm.
Section~\ref{sec:gcsc} describes the proposed method,
Experimental results
are presented in Section~\ref{sec:expts}, and
the last section gives some
concluding remarks.

\noindent
{\bf Notations}: 
For vector $a \in\R^m$, its $i$th element is denoted $a(i)$, its $\ell_2$-norm is $\|a\|_2 =
\sqrt{\sum_{i=1}^{m} (a(i))^2}$, its $\ell_1$-norm is
$\|a\|_1 = \sum_{i=1}^{m} |a(i)|$, and 
$\diag(a)$ reshapes $a$ to a diagonal matrix with elements $a(i)$'s.
Given another vector $b\in\R^n$, 
the convolution $a*b$ produces a vector $c\in\R^{m+n-1}$, with $c(k)=\sum_{j
	=\max(1,k+1-n)}^{\min(k,m)} a(j)b(k-j+1)$.
For matrix $A$,
$A^\top$	
denotes its
transpose, 
$A^\star$	
denotes its complex
conjugate,
$A^\dagger$ is its conjugate transpose (i.e., 
$A^\dagger=
({A^\top})^\star$).
$\odot$ denotes pointwise product.
The identity matrix is denoted $I$.
$\FFT(x)$
is 
the fast Fourier transform
that maps $x$ from the spatial domain to the frequency
domain, 
while 
$\iFFT(x)$ is the inverse operator which maps $\FFT(x)$ back to $x$.

%%%%%%%%%%%%%%%%%%%%%%%%%%%%%%%%%%%%%%%%%%%%%%%%%%%%%%%%%%%%%%%%%%%%%%

\section{Related Work}\label{sec:related_works}

%%%%%%%%%%%%%%%%%%%%%%%%%%%%%%%%%%%%%%%%%%%%%

\subsection{Convolutional Sparse Coding}
\label{sec:csc}

Given $N$ 
samples
${x}_i$'s, where each ${x}_i\in \mathbb{R}^{P}$,
CSC learns a dictionary 
of $K$ filters
$d_k$'s,
each of length $M$, such that each ${x}_i$ can be well represented as
\begin{equation} \label{eq:x}
\tilde{x}_i = \sum_{k=1}^{K}d_k * z_{ik}.
\end{equation} 
Here,
$*$ is the (spatial) convolution operation, and $z_{ik}$'s  are the codes for 
$x_i$, each of length $P$.
The filters and 
codes are obtained
by solving  the following optimization problem
\begin{equation}\label{eq:csc}
\min_{\{d_k\} \in \mathcal{D}, \{z_{ik}\}} 
\sum_{i=1}^{N}\left(  \frac{1}{2}
\left\| {x}_i \!\!-\!\!  \sum_{k=1}^{K}d_k \!*\!  z_{ik} \right\|^2_2 \!\!+\!\! \sum_{k=1}^K \beta\|z_{ik}\|_1\right) ,
\end{equation}
where $\mathcal{D} = \{ D : \NM{ d_k }{2} \le 1, k = 1,\dots, K \}$ ensures
that the filters are normalized, and  the $\ell_1$ regularizer encourages the
codes to be sparse.

To solve \eqref{eq:csc}, 
block coordinate descent (BCD) \cite{tseng2001convergence} 
is typically used
\cite{zeiler2010deconvolutional,heide2015fast,kavukcuoglu2010learning,bristow2013fast,wohlberg2016efficient,sorel2016fast}.
The codes and dictionary 
are updated
in an alternating manner as follows.

\subsubsection{Code Update}
Given 
$d_k$'s, the corresponding
$z_{ik}$'s are obtained as
\begin{eqnarray}
\min_{\{z_{ik}\}}
\frac{1}{2} 
\NML{{x_i}
	-
	\sum_{k=1}^{K}{d_k}
	*
	{z_{ik}}}{2}^2
+\beta \sum_{k=1}^{K}\NM{z_{ik}}{1}.
\label{eq:csc_code_spa} 
\end{eqnarray}

Convolution can be performed much faster in the frequency domain via the convolution theorem\footnote{$\FFT(d_{k} *z_{ik} )=\FFT(d_k)\odot\FFT(z_{ik} )$, 
	where $d_k$ is first zero-padded to $P$-dimensional.} \cite{mallat1999wavelet}. Combining this with the use of  
Parseval's theorem\footnote{
For $a\in\R^P$, $\NM{a}{2}^2=\frac{1}{P}\NM{\FFT(a)}{2}^2$.
} \cite{mallat1999wavelet} and the linearity of FFT, 
problem \eqref{eq:csc_code_spa} 
is reformulated in \cite{bristow2013fast,heide2015fast,wohlberg2016efficient} as:
\begin{eqnarray}
\min_{\{z_{ik}\}}
\frac{1}{2P} 
\NML{\FFT (x_i)
	-
	\sum_{k=1}^{K}\FFT (d_k)
	\odot
	\FFT (z_{ik})}{2}^2
+\beta \sum_{k=1}^{K}\NM{z_{ik}}{1}.
\label{eq:csc_code_fre} 
\end{eqnarray} 

\subsubsection{Dictionary Update} 
Given $z_{ik}$'s, 
$d_k$'s is 
updated by solving
\begin{eqnarray}
\min_{\{d_k\}\in\mathcal{D}}
&&
\frac{1}{2} \sum_{i=1}^{N} 
\NML{{x}_i
	-
	\sum_{k=1}^{K} {d_{k}}
	*
	 {z_{ik}}}{2}^2.
\notag
\end{eqnarray}
Similar to the code update,  it is more efficient to perform
convolution in the frequency domain, as:
\begin{eqnarray}
\min_{\{d_k\}}
&&
\frac{1}{2P} \sum_{i=1}^{N} 
\NML{\FFT (x_i)
	-
	\sum_{k=1}^{K}\FFT (d_{k})
	\odot
	\FFT (z_{ik})}{2}^2
\label{eq:csc_dic_fre}
\\
\text{s.t.}&
& \|C{\iFFT(\FFT (C^\top d_k))}\|_2^2 \le 1, \forall k,
\nonumber
\end{eqnarray}
where 
$C\in \R^{M\times P}$ is a matrix with $C(i,i)=1$ and $C(i,j)=0$ for $i\neq j$ which is used to 
crop the extra dimension to recover the original spatial support, and
$C^\top$ can pad $d_k$ to be $P$-dimensional.
The constraint 
scales all filters to unit norm.

The alternating direction method of multipliers (ADMM) 
\cite{boyd2011distributed} has been commonly used 
for the code update  and dictionary update 
subproblems (\ref{eq:csc_code_fre}) 
and (\ref{eq:csc_dic_fre}))
\cite{heide2015fast,bristow2013fast,wohlberg2016efficient,sorel2016fast}.
Each ADMM subproblem has a closed-form solution that is easy to compute.
Besides, 
with the introduction of auxiliary variables,
ADMM separates computations in 
the frequency domain
(involving convolutions) and
spatial domain (involving
the $\ell_1$ regularizer and unit norm constraint).

%%%%%%%%%%%%%%%%%%%%%%%%%%%%%%%%%%%%%%%%%%%%%%%%%%%%%%%%%%%%%%%%%%%%%%

\subsection{Proximal Algorithm}
\label{sec:prox}

The proximal algorithm \cite{parikh2014proximal} is used to solve 
composite optimization problems
of the form
\begin{equation*} \label{eq:prox} 
\min_x
F(x) \equiv 
f(x) + r(x),
\end{equation*}
where $f$ is smooth, $r$ is nonsmooth, and both are convex.
To make the proximal algorithm efficient, its underlying proximal step
(with
stepsize
$\eta$)
\begin{equation*}\label{eq:prox_step}
\Px{ \eta r }{ z } = \arg\min_{x} \frac{1}{2}\NM{x - z}{2}^2 +\eta r(x)
\end{equation*}
has to be inexpensive. 

Recently,
the proximal algorithm has been 
extended to nonconvex problems where both $f$ and $r$ can be nonconvex. 
A state-of-the-art is the nonconvex inexact accelerated proximal gradient (niAPG) algorithm \cite{yao2017efficient}, 
shown in Algorithm~\ref{alg:niAPG}.
It can efficiently converge to a critical point of the objective.

\begin{algorithm}[ht]
	\caption{Nonconvex inexact accelerated proximal gradient (niAPG) algorithm \cite{yao2017efficient}.}
	\begin{algorithmic}[1]
		\REQUIRE $x^{0} = x^{1} = 0$ and $\eta$;
		\FOR{$\tau = 1, \dots, J $}
		\STATE $v^{\tau} = x^{\tau} + \frac{\tau - 1}{\tau + 2} (x^{\tau} - x^{\tau - 1})$;		
		\STATE $\Delta^\tau = \max_{t = \max(1, \tau - 5), \dots, \tau} F(x^{t})$;
		
		\IF{$F(v^{\tau}) \ge \Delta^\tau$}
		\STATE $v^{\tau} = x^{\tau}$;
		\ENDIF
		\STATE $x^{\tau+1} = \Px{ \eta r  }{v^{\tau}- \eta \nabla f(v^{\tau}) }$;
		\ENDFOR 
		\RETURN $x^{J+1}$.
	\end{algorithmic}
	\label{alg:niAPG}
\end{algorithm}

%%%%%%%%%%%%%%%%%%%%%%%%%%%%%%%%%%%%%%%%%%%%%%%%%%%%%%%%%%%%%%%%%%%%%%

\section{Proposed Method}
\label{sec:gcsc}

The square loss in 
(\ref{eq:csc}) implicitly assumes that the noise is
normally distributed.
In this section, we relax this assumption, and assume the noise to be generated
from
a Gaussian mixture model (GMM).
It is well-known that 
a GMM
can approximate any continuous probability density function 
\cite{maz1996approximate}. 

As in other applications of GMM, we will
use the EM algorithm for inference. However, as will be shown,
the M-step involves a difficult weighted CSC problem.
In Section~\ref{sec:wcsc}, 
we design an efficient solver based on nonconvex accelerated proximal algorithm,
with comparable time and space complexity as
state-of-the-art CSC algorithms (for 
the square loss).

%%%%%%%%%%%%%%%%%%%%%%%%%%%%%%%%%%%%%%%%%%%%%%%%%%%%%%%%%%%%%%%%%%%%%%

\subsection{GMM Noise}
\label{sec:gcsc_formulation}

We assume that the 
noise $\epsilon_{i}$ 
associated with ${x}_i$
follows the GMM distribution:
\begin{equation*}
p(\epsilon_{i}) = \sum_{g=1}^G p(\epsilon_{i}|\phi_{i}=g) p(\phi_{i}=g),
\end{equation*}
where $G$ is the number of Gaussian components, 
$\phi_{i}$
is the latent variable 
denoting which Gaussian component 
$\epsilon_{i}$ belongs to, and
$\pi_g$'s are mixing coefficients
with $\sum_{g=1}^G\pi_g =1$.
Variable
$\phi_{i}$ follows the multinomial distribution $\text{Multinomial}(\{\pi_g\})$,
and the conditional distribution of
$\epsilon_{i}$ given $\phi_{i}=g$ follows
the normal distribution 
$\mathcal{N}(\mu_g,\Sigma_{g})$ 
with 
mean 
$\mu_g$ 
and diagonal 
covariance matrix
$\Sigma_g=\diag(\sigma^2_{g}(1), \dots, \sigma^2_{g}(P))$.
The $\ell_1$ regularizer in  (\ref{eq:csc})
corresponds to 
the prior Laplace distribution 
($\text{Laplace}(0,\frac{1}{\beta})$) on each $p$th element of $z_{ik}$:
$
p(z_{ik}(p)) = \frac{\beta}{2}\exp(-\beta|z_{ik}(p)|)
$. 

%%%%%%%%%%%%%%%%%%%%%%%%%%%%%%%%%%%%%%%%%%%%%%%%%%%%%%%%%%%%%%%%%%%%%%

\subsection{Using Expectation Maximization (EM) Algorithm}

Let $\Theta$ denote the collection of all parameters
$\pi_g$'s, $\mu_g$'s, $\Sigma_{g}$'s, $d_k$'s and $z_{ik}$'s.
The log posterior probability 
for 
$\Theta$
is:
\begin{eqnarray}
\log \mathcal{P} 
= \sum_{i=1}^N \left(\log \sum_{g=1}^G p({x}_{i}|\phi_{i}=g)
+\log \sum_{g=1}^G
p(\phi_{i}=g)+\sum_{k=1}^K \sum_{p=1}^P\log p(z_{ik}(p))\right).
\label{eq:gcsc_llh}
\end{eqnarray}
This can be maximized by the 
Expectation Maximization (EM) algorithm 
\cite{dempster1977maximum}.

The E-step computes
$p(\phi_{i}\!=\!g|{x}_{i})$,
the posterior probability that $\phi_{i}$ belongs to the $g$th Gaussian
given ${x}_i$.
Using Bayes rule, we have
\begin{eqnarray}
p(\phi_{i}\!=\!g|{x}_{i})
&=&\!\!
\frac{\pi_g p({x}_{i}|\phi_{i}\!=\!g)}
{\sum_{g=1}^G \pi_g p({x}_{i}|\phi_{i}=g)},
\label{eq:estep}
\end{eqnarray}
where 
${x}_{i}|(\phi_{i}=g) \sim
\mathcal{N}(
\tilde{x}_i+\mu_g,
\Sigma_{g})$, and 
$\tilde{x}_i = \sum_{k=1}^{K}d_k * z_{ik}$
in \eqref{eq:x}.

The M-step obtains $\Theta$ by maximizing  
the following upper bound of 
$\log \mathcal{P}$ in \eqref{eq:gcsc_llh} as
\begin{align} \nonumber
\lefteqn{\arg\max_{\Theta}\sum_{i=1}^N  \left( \sum_{g=1}^G \gamma_{gi}\log \frac{
		p({x}_{i},\phi_{i})}{\gamma_{gi}}\!\!+\!\!\sum_{k=1}^K \sum_{p=1}^P\log
		p(z_{ik}(p)) \right)}\\
\nonumber \\
\!\!& =\!\!\arg\max_{\Theta} \!\! \sum_{i=1}^N \left(\sum_{g=1}^G \gamma_{gi}\log
p({x}_{i}|\phi_{i}=g) \!\!+\!\!\beta\sum_{k=1}^K \NM{z_{ik}}{1}\right) \nonumber \\\label{eq:ub}
\!\!& =\!\! \arg\max_{\Theta}  \!\!\sum_{i=1}^{N}
\left(\!\!
\beta\sum_{k=1}^K \NM{z_{ik}}{1}
\!\!+\!\!\sum_{g=1}^G \gamma_{gi} \log
\pi_g\!\!-\!\!\frac{\gamma_{gi}}{2}\log(|\Sigma_{g}|) 
\!\!-\!\!\frac{\gamma_{gi}}{2}\!\!\left({x}_{i}\!\!-\!\!\sumDZ\!\!-\!\!\mu_g\!\!\right)^\top
\!\!\!\!\Sigma_{g}^{-1}\!\!\left({x}_{i}\!\!-\!\!\sumDZ\!\!-\!\!\mu_g\right)\!\!\! \right)\!\!,\!\!\! 
\end{align} 
where
$\gamma_{gi}=p(\phi_{i}=g|{x}_{i})$. 

%%%%%%%%%%%%%%%%%%%%%%

\subsubsection{Updating $\pi_g$'s, $\mu_g$'s, $\Sigma_{g}$'s
in $\Theta$}

Given
$d_k$'s, $z_{ik}$'s and $\gamma_{gi}$'s, let $\tilde{x}_i=\sumDZ$ as in \eqref{eq:x}, we obtain $\pi_g$'s, $\mu_g$'s and $\Sigma_{g}$'s by optimizing (\ref{eq:ub})
as:
\begin{align}
\max_{\{\pi_g,\mu_g,\Sigma_g\}} \sum_{i=1}^{N} \sum_{g=1}^G \left( \gamma_{gi} \log
\pi_g-\frac{\gamma_{gi}}{2}\log(|\Sigma_{g}|)
-\frac{\gamma_{gi}}{2}({x}_{i}-\tilde{x}_i-\mu_g)^\top\Sigma_{g}^{-1}({x}_{i}-\tilde{x}_i-\mu_g)
\right).
\notag
\end{align}
Taking the derivative of the objective
to zero,
the following closed-form solutions 
can be easily obtained:
\begin{eqnarray}
\pi_g &=&\frac{1}{N}\sum_{i=1}^N \gamma_{gi},
\label{eq:gmm_em_m_pi} \\
\mu_g  &=&\frac{\sum_{i=1}^N  \gamma_{gi}{x}_{i}}{\sum_{i=1}^N \gamma_{gi}},
\label{eq:gmm_em_m_mu}
\\
\Sigma_g &=&
\frac{\sum_{i=1}^{N}
	\gamma_{gi}
	({x}_{i}-\tilde{x}_i-\mu_g)({x}_{i}-\tilde{x}_i-\mu_g)^\top}
	{\sum_{i=1}^{N}\gamma_{gi}}.
\label{eq:gmm_em_m_sigma}
\end{eqnarray}

%%%%%%%%%%%%%%%%%%%%%%

\subsubsection{Updating $d_k$'s and $z_{ik}$'s
in $\Theta$}

Given $\pi_g$'s, $\mu_g$'s, $\Sigma_{g}$'s and $\gamma_{gi}$'s, we obtain
$d_k$'s and $z_{ik}$'s 
from (\ref{eq:ub})
as:
\begin{align*}
\min_{\{d_k\}\in\mathcal{D}, \{z_{ik}\}} \!\!
\sum_{i=1}^{N}\!\!
\left( \!\!\beta\sum_{k=1}^K \NM{z_{ik}}{1}
\!\!-\!\!\sum_{g=1}^G\frac{\gamma_{gi}}{2}\left({x}_{i}\!\!-\!\!\sumDZ\!\!-\!\!\mu_g\right)^\top
\Sigma_{g}^{-1}\left({x}_{i}\!\!-\!\!\sumDZ\!\!-\!\!\mu_g\right)\right).
\end{align*}
This can be rewritten as 
\begin{align}
\!\!\!\!\!\!
\min_{\{ \! d_k \! \}
\in
\mathcal{D}, \{ \! z_{ik} \! \}}
\!\!\!\!\!\!\!
F(\{  d_k  \}, \! \{ z_{ik} \})
\!\equiv\!
 f(\{d_k\},\! \{z_{ik}\})
\!+\!r(\{d_k\}, \! \{z_{ik}\}),
\label{eq:wcsc_F}
\end{align}
where
\begin{eqnarray}
\!\!
f(\{ d_k  \}, \! \{  z_{ik}  \})
\! \equiv \!
\frac{1}{2} \! \sum_{i=1}^{N} \! \sum_{g=1}^{G}
\| w_{gi} \! \odot \! ({x}_i \! - \! \sumDZ \! - \! \mu_g) \|_F^2, 
\label{eq:f}
\end{eqnarray}
with
$w_{gi}(p)=\sqrt{\frac{\gamma_{gi}}{\sigma^2_{g}(P)}}$,
and
\begin{equation} \label{eq:r}
r(\{d_k\}, \{z_{ik}\}) \equiv \beta\sum_{k=1}^{K}\|z_{ik}\|_1 + I_{\mathcal{D}}(\{d_k\}). 
\end{equation} 
Here,
$I_{\mathcal{D}}(\cdot)$ is the indicator function on $\mathcal{D}$
(i.e., $I_{\mathcal{D}}(\{d_k\}) = 0$ if $\{d_k\}\in \mathcal{D}$, and $\infty$
otherwise).

Compared with the standard CSC problem in (\ref{eq:csc}),
problem (\ref{eq:wcsc_F}) can be viewed as a weighted CSC problem (with weight $w_{gi}$).
A CSC variant 
\cite{heide2015fast}
puts weights on $\sum_{k=1}^{K}d_k*z_{ik}$, while ours are on
${x}_i -\sumDZ-\mu_g$.
In \cite{jas2017learning},
the model 
also leads to a weighted CSC problem.
However, the authors there mentioned that 
it is not clear
how to solve a weighted CSC problem in the frequency domain.
Instead,
they resorted to solving it in the spatial domain, which is
less efficient
as discussed in Section~\ref{sec:csc}.

%%%%%%%%%%%%%%%%%%%%%%%%%%%%%%%%%%%%%%%%%%%%%%%%%%%%%%%%%%%%%%%%%%%%%%

\subsection{Solving the Weighted CSC Subproblem \eqref{eq:wcsc_F}}
\label{sec:wcsc}

In this section, we solve the weighted CSC problem in (\ref{eq:wcsc_F}) 
using niAPG \cite{yao2017efficient} 
(Algorithm~\ref{alg:niAPG}). 
Note that the weights $w_{gi}$'s in $f$ (\ref{eq:f}) is the cause that 
prevents us from transforming the whole  objective (\ref{eq:wcsc_F}) to the frequency domain. 
Recall that \eqref{eq:csc_code_spa} is transformed to \eqref{eq:csc_code_fre} by first transforming everything in $\ell_2$ norm to frequency domain by Parseval's theorem, separately computing $\FFT(x_i)$ and $\FFT(\sumDZ)$ by linearity of FFT,  then replacing the convolution in $\FFT(\sumDZ)$ by pointwise product using convolution theorem. However, with $w_{gi}$'s, $\FFT(w_{gi}\odot(\sumDZ))$ cannot use convolution theorem to speed up.
Therefore, the key in designing an efficient solver is to only transform terms involving convolutions to
the frequency domain,
while leaving the weight $w_{gi}$ in spatial domain.
Hence we replace the ${x}_i \!-\!\sumDZ\!-\!\mu_g$ term in (\ref{eq:f})
by $\mathcal{F}^{-1} { (\FFT{({x}_i-\mu_g)} - \sum_{k=1}^{K}\FFT {(C^\top{d_k})}\odot \FFT {(z_{ik})} )}$.

The 
core steps in the niAPG algorithm are computing  (i) the gradient
$\nabla f(\cdot)$ w.r.t. 
$d_k$'s and $z_{ik}$'s
, and 
(ii) 
the proximal step $\text{prox}_{\eta r}(\cdot)$.  
We 
first 
introduce the following Lemmas.

\begin{lemma}\label{lemma:pointwise_mul}
Let $f(x)=a\odot x$ for
	$a,x\in\mathbb{C}^{P}$.
Then,
$\nabla_x f(x) =\diag(a^{\star})$, which
	reshapes $a^{\star}$ to a diagonal matrix with elements $a^{\star}(p)$'s.
\begin{proof}
$f(x)=Ax$, where
$A=\diag(a)$. Then,
$\nabla_x f(x)= A^\dagger=
\diag(a^{\star})$.
	\end{proof}
\end{lemma}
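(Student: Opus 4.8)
The plan is to exploit the fact that pointwise multiplication by a fixed vector is a \emph{linear} operation, so that the ``gradient'' in the sense used throughout the niAPG updates --- i.e., the linear operator that the chain rule must apply when back-propagating through $f$ inside a real-valued objective --- is simply the adjoint (conjugate transpose) of the associated matrix.

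First I would rewrite $f(x) = a\odot x$ as $f(x) = Ax$ with $A = \diag(a) \in \C^{P\times P}$; this identity is immediate from the definitions of $\odot$ and of $\diag(\cdot)$, since $(Ax)(p) = a(p)\,x(p) = (a\odot x)(p)$ for every $p$. Next I would invoke the standard rule that, for a linear map $x\mapsto Ax$ between complex vector spaces that appears inside a real-valued cost, the gradient/Jacobian entering the chain rule is $A^\dagger$; this is the Wirtinger-calculus convention that is already implicitly used in Section~\ref{sec:csc} when FFT-domain gradients are taken together with Parseval's theorem. Hence $\nabla_x f(x) = A^\dagger$.

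Finally I would simplify $A^\dagger$. Since $A = \diag(a)$ is diagonal, $A^\top = A$, so $A^\dagger = (A^\top)^\star = A^\star$, and complex conjugation of a diagonal matrix acts entrywise on its diagonal, giving $A^\star = \diag(a^\star)$. Chaining the two statements gives $\nabla_x f(x) = \diag(a^\star)$, as claimed.

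The only real subtlety --- and the step I would be most careful about --- is the choice of convention in the second step: one must be explicit that the ``gradient'' here is the conjugate-transpose Jacobian $A^\dagger$ rather than $A$ or $A^\top$, precisely because $f$ maps complex vectors to complex vectors and is used only as an intermediate layer of a real-valued objective; getting this conjugate right is exactly what makes the subsequent formulas for $\nabla f$ with respect to the $\FFT(d_k)$'s and $\FFT(z_{ik})$'s correct. Everything else reduces to a one-line computation with diagonal matrices.
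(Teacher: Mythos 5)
Your proof is correct and follows essentially the same route as the paper's: rewrite $a\odot x$ as $\diag(a)\,x$ and identify the gradient of the linear map with the conjugate transpose $\diag(a)^\dagger=\diag(a^\star)$. Your added remarks on the Wirtinger/adjoint convention merely make explicit what the paper leaves implicit.
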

\begin{lemma}\label{lemma:fft}\cite{cooley1969fast}
For $x\in\R^P$, $\FFT(x)=\Phi x$ and $\iFFT(x)=\frac{1}{P} {\Phi}^\dagger x $, where $\Phi=[\frac{\omega^{jk}}{P}] \in \R^{P\times P}$, $\omega=e^{\frac{-2\pi i}{P}}$ is the $P$th root of unity, and $i=\sqrt{-1}$. 
Moreover, $\nabla_x\FFT(x) = \Phi^\dagger = P\iFFT(\cdot)$ and $\nabla_x \iFFT(x) = \frac{1}{P} {\Phi} = \frac{1}{P}\FFT(\cdot)$.
\end{lemma}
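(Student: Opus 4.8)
The first identity is, in effect, the definition of the discrete Fourier transform written in matrix form, so I would dispatch it in one line: since $(\FFT(x))(k)$ is a fixed linear combination of the entries of $x$ with coefficients $\omega^{jk}/P$, collecting these coefficients into the matrix $\Phi$ gives $\FFT(x)=\Phi x$ by inspection, and likewise $\iFFT$ is linear so it is represented by some matrix. To pin that matrix down to $\frac{1}{P}\Phi^\dagger$, the plan is to invoke the orthogonality relation of the $P$th roots of unity, $\sum_{j} \omega^{j(k-\ell)} = P\,\delta_{k\ell}$, which follows from the elementary geometric-sum argument (the sum equals $P$ when $k=\ell$, and otherwise vanishes because $\omega^{k-\ell}\neq 1$ while $(\omega^{k-\ell})^{P}=1$). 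This shows $\Phi^\dagger\Phi$ is a scalar multiple of $I$, hence that $\frac{1}{P}\Phi^\dagger$ is a two-sided inverse of $\FFT$, which is exactly the claimed formula for $\iFFT$.

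For the gradient formulas I would treat $\FFT$ and $\iFFT$ purely as the linear maps $x\mapsto \Phi x$ and $x\mapsto \frac{1}{P}\Phi^\dagger x$ on $\C^P$ and reuse the computation already made in Lemma~\ref{lemma:pointwise_mul}: for a linear map $x\mapsto Ax$ one has $\nabla_x(Ax)=A^\dagger$. Applying this with $A=\Phi$ gives $\nabla_x\FFT(x)=\Phi^\dagger$, and with $A=\frac{1}{P}\Phi^\dagger$ gives $\nabla_x\iFFT(x)=\bigl(\frac{1}{P}\Phi^\dagger\bigr)^\dagger=\frac{1}{P}\Phi$. It then remains only to recognize these constant matrices as the operators on the right-hand side: using $\overline{\omega}=\omega^{-1}$ together with the symmetry $\Phi^\top=\Phi$ (because $\omega^{jk}=\omega^{kj}$), a one-line entrywise comparison shows $\Phi^\dagger$ is the matrix of $P\,\iFFT(\cdot)$ and $\frac{1}{P}\Phi$ is the matrix of $\frac{1}{P}\FFT(\cdot)$, which completes the proof.

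The only point that needs care is what "$\nabla_x$" means here, since $\FFT\colon\C^P\to\C^P$ is complex-valued. I would state explicitly that the derivative is taken in the same (Wirtinger / conjugate-transpose Jacobian) sense as in Lemma~\ref{lemma:pointwise_mul}, so that the Jacobian of $x\mapsto Ax$ is $A^\dagger$ and not $A^\top$; with that convention fixed, everything else is either the definition of the DFT, the roots-of-unity orthogonality identity, or the trivial differentiation of a linear map, and no limiting arguments are involved. Accordingly I expect the roots-of-unity computation to be the only step worth writing out in full, with the linearity and entry-matching steps each handled in a sentence.
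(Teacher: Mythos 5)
The paper does not actually prove this lemma --- it is imported wholesale from the citation \cite{cooley1969fast} --- so there is no in-paper argument to compare yours against; your proposal supplies the standard proof, and its three ingredients (the DFT written as a matrix, the roots-of-unity orthogonality relation, and the Jacobian of a linear map under the conjugate-transpose convention of Lemma~\ref{lemma:pointwise_mul}) are exactly the right ones. One concrete caveat: if you carry out your orthogonality computation with the matrix \emph{as literally stated}, $\Phi=[\omega^{jk}/P]$, you get $\Phi^\dagger\Phi=\frac{1}{P}I$, so $\frac{1}{P}\Phi^\dagger$ is \emph{not} a two-sided inverse of $\Phi$ (it is off by a factor of $P^2$); the statement is only consistent under the unnormalized convention $\Phi=[\omega^{jk}]$, which is also what the paper's Parseval footnote $\NM{a}{2}^2=\frac{1}{P}\NM{\FFT(a)}{2}^2$ presupposes (and note $\Phi$ lives in $\C^{P\times P}$, not $\R^{P\times P}$). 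Your plan as written asserts that the scalar ``works out'' without computing it, which is precisely where the discrepancy hides, so when you write the proof out you should either fix the normalization of $\Phi$ or make the scalar explicit. The rest --- the symmetry $\Phi^\top=\Phi$, the identification $\Phi^\dagger=P\,\iFFT(\cdot)$ (which is immediate once $\iFFT(x)=\frac{1}{P}\Phi^\dagger x$ is established, no entrywise comparison needed), and your explicit flagging of the Wirtinger/conjugate-transpose sense of $\nabla_x$ --- is correct and appropriately brief.
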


\begin{proposition}
For $f$ in 
(\ref{eq:f}),
\begin{eqnarray*}
\frac{\partial f(\{d_k\}, \{z_{ik}\}) }{\partial{d_{k}}}
&
\!=\! &
-C
\iFFT
\left(
\sum_{i=1}^N (\FFT (z_{ik}))^{\star}\!\odot\!
\FFT (u_i)
\right),\\
\frac{\partial f(\{d_k\}, \{z_{ik}\}) }{\partial{z_{ik}}}
&
\!=\! &
-\iFFT
((\FFT (d_{k}))^{\star}\odot
\FFT
(
u_i)
),
\end{eqnarray*} 
where
$u_i
=
\sum_{g=1}^G
w_{gi}
\odot
w_{gi}
\odot
\mathcal{F}^{-1} 
( \FFT({x}_i-\mu_g) 
	-
	\sum_{k=1}^{K}\FFT (C^\top{d_k})\odot \FFT ({z_{ik}})
	)$.
\end{proposition}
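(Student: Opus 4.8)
The plan is to obtain both partial derivatives by the chain rule, differentiating backward through the composition that the substitution in \eqref{eq:f} makes explicit. After writing the residual as $e_{gi}\equiv\iFFT(\FFT(x_i-\mu_g)-\sum_{k=1}^{K}\FFT(C^\top d_k)\odot\FFT(z_{ik}))$, each summand $\tfrac12\|w_{gi}\odot e_{gi}\|_F^2$ of $f$ is, as a function of $z_{ik}$, the composition: $z_{ik}\mapsto\FFT(z_{ik})$, then pointwise product with $\FFT(C^\top d_k)$, then summation over $k$ and subtraction from the constant $\FFT(x_i-\mu_g)$, then $\iFFT(\cdot)$, then pointwise product with $w_{gi}$, then $\tfrac12\|\cdot\|_F^2$; as a function of $d_k$ it enters symmetrically through $d_k\mapsto C^\top d_k\mapsto\FFT(C^\top d_k)$ followed by the same pointwise product. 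I would then apply the two lemmas node by node: Lemma~\ref{lemma:pointwise_mul} at the $w_{gi}\odot(\cdot)$ node gives gradient $w_{gi}\odot w_{gi}\odot e_{gi}$ in $e_{gi}$ (using that $w_{gi}$ is real, so $w_{gi}^\star=w_{gi}$); Lemma~\ref{lemma:fft} turns backpropagation through the $\iFFT$ node into multiplication by $\tfrac1P\FFT(\cdot)$ and through the $\FFT(z_{ik})$ node into multiplication by $P\,\iFFT(\cdot)$; and Lemma~\ref{lemma:pointwise_mul} at the pointwise product by $\FFT(C^\top d_k)$ (respectively $\FFT(z_{ik})$) contributes the conjugate factor $(\FFT(C^\top d_k))^\star$ (respectively $(\FFT(z_{ik}))^\star$), while the subtraction supplies the overall minus sign.

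Concatenating these factors, the $\tfrac1P$ from the $\iFFT$ node cancels the $P$ from the $\FFT$ node, so the derivative of the single summand $\tfrac12\|w_{gi}\odot e_{gi}\|_F^2$ with respect to $z_{ik}$ is $-\iFFT((\FFT(d_k))^\star\odot\FFT(w_{gi}\odot w_{gi}\odot e_{gi}))$ (writing $\FFT(d_k)$ for $\FFT(C^\top d_k)$, i.e. the zero-padded filter as in the footnote of Section~\ref{sec:csc}), and with respect to $d_k$ it is the same expression with $(\FFT(z_{ik}))^\star$ in place of $(\FFT(d_k))^\star$ and an outer $-C\,\iFFT(\cdot)$ to undo the padding $C^\top$ (the Jacobian of $d_k\mapsto C^\top d_k$ contributing $(C^\top)^\dagger=C$). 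I would then sum over $g$ for the $z_{ik}$-derivative, and over $g$ and $i$ for the $d_k$-derivative; by linearity of $\FFT$ the sum over $g$ moves inside to give $\FFT(\sum_{g=1}^{G} w_{gi}\odot w_{gi}\odot e_{gi})$, and recognizing $\sum_{g=1}^{G} w_{gi}\odot w_{gi}\odot e_{gi}$ as precisely the $u_i$ of the statement yields the two claimed formulas. As a sanity check, $u_i$ is a real vector, so $\FFT(u_i)$ and the outermost $\iFFT(\cdot)$ are unambiguous and the gradients come out real, consistent with $f$ being a real function of the real variables $d_k$ and $z_{ik}$.

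The main obstacle is essentially the bookkeeping forced by the complex-conjugation convention of Lemmas~\ref{lemma:pointwise_mul} and~\ref{lemma:fft}: the ``gradient'' there is the conjugate transpose of the Jacobian, so one must make sure every $(\cdot)^\star$ (coming from a pointwise-product node) and every choice of $\FFT$ versus $\iFFT$ (coming from a transform node) attaches to the correct factor, and that the $\tfrac1P$ normalizations truly cancel rather than accumulate. A minor point is to carry the crop/pad matrices $C$ and $C^\top$ through consistently. Beyond this, there is no analytic difficulty: once the computational graph is written down, every node is linear and the two lemmas supply exactly the differentiation rule needed at each one.
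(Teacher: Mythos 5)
Your proposal is correct and follows essentially the same route as the paper: the same decomposition of each summand into a chain of pointwise-product and $\FFT$/$\iFFT$ nodes, the same use of Lemmas~\ref{lemma:pointwise_mul} and~\ref{lemma:fft} at each node with the $\tfrac1P$ and $P$ factors cancelling, and the same final summation over $g$ (and $i$) to identify $u_i$. The only cosmetic difference is that you explicitly note $w_{gi}^\star=w_{gi}$ and the realness sanity check, which the paper leaves implicit.
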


\begin{proof}
$f$
can be rewritten as
$f(\{d_k\}, \{z_{ik}\})
= \sum_{i=1}^{N} \sum_{g=1}^{G}\frac{1}{2}\NM{g_1}{2}^2$, 
where 
$g_{1}  =   w_{gi}\odot g_{2}$,
$g_{2}  =   \iFFT (g_{3})$, 
$g_{3}  =  \FFT({x}_i-\mu_g) - g_4$,
$g_{4}  =  \sum_{k=1}^{K}g_{6k}\odot g_{5k}$,
$g_{5k}  = \FFT({z_{ik})}$,
$g_{6k}  =  \FFT (g_{7k})$,
$g_{7k}  =  C^\top{d_k}$.

Using Lemma~\ref{lemma:pointwise_mul}, 
$\frac{\partial g_1}{\partial g_2} = \diag(w_{gi})$, 
$\frac{\partial g_{4}}{\partial g_{5k}} = \diag(g^{\star}_{6k})$,
and
$\frac{\partial g_{4}}{\partial g_{6k}} =\diag( g^{\star}_{5k})$.
Using Lemma~\ref{lemma:fft},
$\frac{\partial g_2}{\partial g_3} =\frac{1}{P} {\Phi}$,
$\frac{\partial g_{5k}}{\partial z_{ik}} =\Phi^\dagger$,
$\frac{\partial g_{6k}}{\partial g_{7k}} =\Phi^\dagger$.
Finally $\frac{\partial g_3}{\partial g_4} =-1$, $\frac{\partial g_{7k}}{\partial d_{k}} =C$ and
$\frac{\partial f(\{d_k\}, \{z_{ik}\})}{\partial g_{1}} = \sum_{i=1}^N\sum_{g=1}^G g_1 = \sum_{i=1}^N\sum_{g=1}^G w_{gi}
\odot
\mathcal{F}^{-1} 
( \FFT({x}_i-\mu_g) 
-
\sum_{k=1}^{K}\FFT (C^\top{d_k})\odot \FFT ({z_{ik}})
)$.

Combining all these, 
using chain rule for denominator layout, 
we obtain
\begin{align}\notag
\lefteqn{\frac{\partial f(\{d_k\}, \{z_{ik}\}) }{\partial{d_{k}}}}\\\notag
&
\!=\!
\frac{\partial g_{7k}}{\partial d_{k}}
\frac{\partial g_{6k}}{\partial g_{7k}}
\frac{\partial g_{4}}{\partial g_{6k}}
\frac{\partial g_{3}}{\partial g_{4}}
\frac{\partial g_{2}}{\partial g_{3}}
\frac{\partial g_{1}}{\partial g_{2}}
\frac{\partial f(\{d_k\}, \{z_{ik}\})}{\partial g_{1}}
\\\label{eq:gra_d}
&
\!=\!
-C
\iFFT
\left(
\sum_{i=1}^N (\FFT (z_{ik}))^{\star}\!\odot\!
\FFT (u_i)
\right),\!\\
\notag
\lefteqn{\frac{\partial f(\{d_k\}, \{z_{ik}\}) }{\partial{z_{ik}}}}\\\notag
&
\!=\!
\frac{\partial g_{5k}}{\partial z_{ik}}
\frac{\partial g_{4}}{\partial g_{5k}}
\frac{\partial g_{3}}{\partial g_{4}}
\frac{\partial g_{2}}{\partial g_{3}}
\frac{\partial g_{1}}{\partial g_{2}}
\frac{\partial f(\{d_k\}, \{z_{ik}\})}{\partial g_{1}}
\\\label{eq:gra_z}
&
\!=\!
-\iFFT
\left	
((\FFT (d_{k}))^{\star}\odot
\FFT
(
u_i)
\right).
\end{align} 
\end{proof}

Note that 
$r(\{d_k\}, \{z_{ik}\})$ 
in (\ref{eq:r})
is separable\footnote{
$r(x,y)$ is
separable if 
$r(x,y)
=r_1(x)+r_2(y)$.}.
This simplifies the 
associated proximal step,
as shown by the following Lemma.
\begin{lemma} \cite{parikh2014proximal}
	\label{lemma:sep_prox}
If $r(x,y)$ is separable,
$\Px{r}{v,w}=\Px{r_1}{v}+\Px{r_2}{w}$.
\end{lemma}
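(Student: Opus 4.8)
The plan is to unfold the definition of the proximal step and exploit the fact that \emph{both} the quadratic proximity term and the regularizer split across the two blocks of variables. Writing the two blocks as $x,y$ with respective targets $v,w$, the starting point is the definition
\[
\Px{r}{v,w}
\;=\;
\arg\min_{x,y}\; \frac{1}{2}\NM{x-v}{2}^2 + \frac{1}{2}\NM{y-w}{2}^2 + r(x,y),
\]
where I use that the squared Euclidean distance between the stacked vectors $(x,y)$ and $(v,w)$ equals $\NM{x-v}{2}^2+\NM{y-w}{2}^2$.

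Next I would invoke separability, $r(x,y)=r_1(x)+r_2(y)$, and regroup the objective as $\left(\frac{1}{2}\NM{x-v}{2}^2 + r_1(x)\right) + \left(\frac{1}{2}\NM{y-w}{2}^2 + r_2(y)\right)$. The first summand depends only on $x$ and the second only on $y$; since these variable groups are disjoint and the minimization is unconstrained, the joint minimization decouples into the two independent problems $\arg\min_x \frac{1}{2}\NM{x-v}{2}^2 + r_1(x) = \Px{r_1}{v}$ and $\arg\min_y \frac{1}{2}\NM{y-w}{2}^2 + r_2(y) = \Px{r_2}{w}$. Concatenating the two block-minimizers then gives $\Px{r}{v,w}=\Px{r_1}{v}+\Px{r_2}{w}$, with ``$+$'' read as stacking the two blocks, which is the claim.

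I do not expect a real obstacle here: this is the standard observation that minimizing a sum of functions over disjoint, unconstrained groups of variables separates into per-group problems. The only point worth a word of care is that in our setting $r_1,r_2$ may be nonconvex, so the $\arg\min$'s should be read as (possibly non-singleton) solution sets; the argument then shows the joint solution set is exactly the Cartesian product of the two block solution sets, so any selection from the right-hand side is a valid proximal point and the identity holds in that sense.
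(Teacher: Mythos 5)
Your argument is correct and is exactly the standard decoupling argument for separable functions: the quadratic proximity term and the regularizer both split across the two blocks, so the joint minimization factors into the two independent proximal problems. The paper itself states this lemma without proof, simply citing Parikh and Boyd, whose proof is the same observation; your added remarks that the ``$+$'' should be read as stacking the blocks and that the $\arg\min$ sets form a Cartesian product in the nonconvex case are careful and correct refinements.
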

Using Lemma~\ref{lemma:sep_prox},
the component proximal steps 
can be easily computed in closed form as \cite{parikh2014proximal}:
$
\Px{ \eta I_{\mathcal{D}}}{ d_k }
 = d_k /\max( \|d_k \|_2, 1)$
, and
$\Px{\beta\eta \|\cdot\|_1}{z_{ik}(p)}
 =
\text{sign}(z_{ik}(p))\odot\max(|z_{ik}(p)| - \beta\eta, 0)$.
In the sequel, we 
avoid tuning
$\eta$ by using line search \cite{grippo2002nonmonotone}, which also 
speeds up convergence 
empirically.
The procedure for the 
solving the weighted CSC subproblem (\ref{eq:f})
is shown in
Algorithm~\ref{alg:wcsc}.
The whole algorithm, which will be called general CSC (GCSC), is shown in Algorithm~\ref{alg:gcsc}.

\begin{algorithm}[ht]
\caption{Solving the weighted CSC subproblem \eqref{eq:wcsc_F}.}
	\begin{algorithmic}[1]
		\REQUIRE  $d^{0}_{k} = d^{1}_{k} = 0, \forall~k$, $z^{0}_{ik}
		= z^{1}_{ik} = 0, \forall~i,k$, $\eta$;
		\FOR{$\tau = 1, \dots, J $}
		\STATE $u^{\tau}_{k} = d^{\tau}_{k} + \frac{\tau - 1}{\tau + 2} (d^{\tau}_{k} - d^{\tau - 1}_{k}), \forall k$;		
		\STATE $v^{\tau}_{ik} = z^{\tau}_{ik} + \frac{\tau - 1}{\tau + 2} (z^{\tau}_{ik} - z^{\tau - 1}_{ik}), \forall i,k$;
		\STATE $\Delta_\tau = \max_{t = \max(1, \tau - 5), \dots, \tau}  F(\{u^{t}_{k}\}, \{ v^{t}_{ik}\})$;
		
		\IF{$ F(\{u^{\tau}_{k}\}, \{ v^{\tau}_{ik}\}) \ge \Delta_\tau$}
		\STATE $u^{\tau}_{k} = d^{\tau}_{k}$, $\forall k$;
		\STATE $v^{\tau}_{ik} = z^{\tau}_{ik}$, $\forall i,k$;
		\ENDIF
		
		\STATE $d^{\tau+1}_{k} = \Px{ \eta I_{ \mathcal{D} } }{u^{\tau}_{k}- \eta  \frac{\partial f(\{d_k\}, \{z_{ik}\}) }{\partial{d_{k}}}}$;		
		\STATE $z^{\tau+1}_{ik} = \Px{\beta\eta \|\cdot\|_1}{v^{\tau}_{ik} - \eta 
		\frac{\partial f(\{d_k\}, \{z_{ik}\})}{\partial{z_{ik}}}}$;
		
		\ENDFOR 
		\RETURN $\{d^{J+1}_{k}\}, \{z^{J+1}_{ik}\}$. 
	\end{algorithmic}
	\label{alg:wcsc}
\end{algorithm}

\begin{algorithm}[htbp]
	\caption{General CSC with GMM loss (GCSC).}\label{alg:gcsc}
	\begin{algorithmic}[1]
		\REQUIRE $\{{x}_i\}$, $\{\pi_{g},\mu_g,\Sigma_g\}$, $\{d_k\}, \{z_{ik}\}$;
		\WHILE {not converged}
		\STATE
		\textbf{E-step}: compute  $p(\phi_{i}\!=\!g|{x}_{i})$ by \eqref{eq:estep};
		\STATE
		\textbf{M-step}: update $\{{\pi_{g},\mu_g,\Sigma_g}\}$ by \eqref{eq:gmm_em_m_pi}, \eqref{eq:gmm_em_m_mu} and \eqref{eq:gmm_em_m_sigma}; 
		update 
		$\{d_k\}$ and $\{z_{ik}\}$ by Algorithm~\ref{alg:wcsc};
		\ENDWHILE
		\RETURN $\{d_k\}, \{z_{ik}\}$.
	\end{algorithmic}
\end{algorithm}

\begin{table*}[htb]
	\caption{Comparing the proposed GCSC  with existing 
		CSC algorithms.
	}
	\centering
	\vspace{-5px}
	\begin{tabular}{c|c|c|c}
		\hline
		method                   & convolution operation &  noise modeling  &                                algorithm                                 \\ \hline
		DeconvNet \cite{zeiler2010deconvolutional} &        spatial        &     Gaussian     &          BCD, updating codes and dictionary by gradient descent          \\
		ConvCod \cite{kavukcuoglu2010learning} &        spatial        &     Gaussian     &          BCD, updating dictionary by gradient descent and codes by encoder         \\		
		FCSC \cite{bristow2013fast}         &       frequency       &     Gaussian     &                BCD, updating codes and dictionary by ADMM                \\
		FFCSC \cite{heide2015fast}         &       frequency       &     Gaussian     &                BCD, updating codes and dictionary by ADMM                \\
		CBPDN \cite{wohlberg2016efficient}     &       frequency       &     Gaussian     &                BCD, updating codes and dictionary by ADMM                \\
		CONSENSUS \cite{sorel2016fast}       &       frequency       &     Gaussian     &                BCD, updating codes and dictionary by ADMM                \\
		SBCSC \cite{papyan2017convolutional}    &        spatial        &     Gaussian     & BCD, updating dictionary by ADMM  and codes by LARS\cite{efron2004least} \\
		OCDL-Degraux \cite{degraux2017online}    &      spatial        &     Gaussian     &                  BCD, updating codes and dictionary by projected BCD                  \\
		OCDL-Liu \cite{liu2017online}        &       frequency       &     Gaussian     &     BCD, updating codes and dictionary by  proximal gradient descent     \\
		OCSC \cite{wang2018ocsc}          &       frequency       &     Gaussian     &                BCD, updating codes and dictionary by ADMM                \\
		CCSC \cite{choudhury2017consensus}     &       frequency       &     Gaussian     &                BCD, updating codes and dictionary by ADMM                \\
		$\alpha$CSC \cite{jas2017learning}     &        spatial        &   alpha-stable   & BCD, updating codes and dictionary by L-BFGS \cite{byrd1995limited}                                                                \\
		GCSC                    &       frequency       & Gaussian mixture &     niAPG                                                                    \\ \hline
	\end{tabular}
	\label{tab:cmp}
\end{table*}

%%%%%%%%%%%%%%%%%%%%%%%%%%%%%%%%%%%%%%%%%%%%%%%%%%%%%%%%%%%%%%%%%%%%%%

\subsection{Complexity Analysis}
\label{sec:gcsc_complexity}

In each EM iteration,
the E-step in
\eqref{eq:estep} takes
$O(GNP)$ time. The M-step is dominated by gradient computations in \eqref{eq:gra_d} and (\ref{eq:gra_z}). These take $O(NKP\log P)$ time for the underlying FFT and inverse FFT operations, and 
$O(GNP)$ time for
the pointwise product.
Thus, 
each EM iteration 
takes a total of
$O(JGNP+ JNKP\log P)$ time, where $J$ is the number of niAPG iterations.
Empirically, $J$ is around 50.
As for space,  this is dominated by the 
$K$ $P$-dimensional codes for each of the $N$ samples, leading to a
space complexity of $O(NKP)$.

In comparison, the state-of-the-art batch CSC method 
(which uses the square loss)
\cite{heide2015fast}
takes
$O(NK^2P +NKP\log P)$ time per iteration
and  
$O(NKP)$ space.
Usually, $JG\ll K^2$. 

%%%%%%%%%%%%%%%%%%%%%%%%%%%%%%%%%%%%%%%%%%%%%%%%%%%%%%%%%%%%%%%%%%%%%%

\subsection{Discussion with Existing CSC Works}
\label{sec:gcsc_discussion}

Table~\ref{tab:cmp}
compares GCSC with existing  
CSC algorithms.
The key differences are in noise modeling and algorithm design.
First, 
all methods
except GCSC and $\alpha$CSC
model the noise by Gaussian distribution, and  
$\alpha$CSC uses symmetric alpha-stable distribution. 
Recall that GMM can approximate any continuous distribution, the noises considered previously all are special case of GMM noise.
Second, 
all algorithms except GCSC use BCD, 
which alternatively updates codes and  dictionary, and a majority of methods then update the codes and dictionary by ADMM separately. 
As GCSC already has one alternating loop between E-step and M-step, 
using BCD and then ADMM will bring in two more alternating loops, resulting a much slower algorithm compared with existing CSC algorithms.
Therefore, 
we 
use niAPG 
to directly update codes and dictionary together. 
Empirical results in next section validate the efficiency of solving the weighted CSC problem \eqref{eq:wcsc_F} in GCSC by niAPG, rather than BCD. 

%%%%%%%%%%%%%%%%%%%%%%%%%%%%%%%%%%%%%%%%%%%%%%%%%%%%%%%%%%%%%%%%%%%%%%

\setcounter{table}{2} 
\begin{table*}[htb]
	%\small
	\caption{ Performance on the synthetic data.
		The best and comparable results (according to the pairwise t-test with 95\% confidence) are highlighted in bold.}
	\begin{center}
		\vspace{-10px}
		\begin{tabular}{cc|c|c|c}
			\hline
			&&MAE&RMSE&time (seconds)\\\hline
			\multirow{5}{*}{no noise}			
			&CSC-$\ell_2$&\textbf{0.000359$\pm$0.000027}&\textbf{0.000847$\pm$0.000109}&\textbf{419.64$\pm$37.86}\\\cline{2-5}
			&CSC-$\ell_1$&\textbf{0.000364$\pm$0.000171}&\textbf{0.000871$\pm$0.000183}&651.67$\pm$98.94\\\cline{2-5}
			&$\alpha$CSC&\textbf{0.000362$\pm$0.000109}&\textbf{0.000868$\pm$0.000122}&2217.44$\pm$345.96\\\cline{2-5}
			&GCSC &\textbf{0.000330$\pm$0.000125}&\textbf{0.000849$\pm$0.000141}&\textbf{414.34$\pm$34.48}\\\hline		
			\multirow{5}{*}{Gaussian noise}			
			&CSC-$\ell_2$&\textbf{0.00368$\pm$0.00036}&\textbf{0.00775$\pm$0.00072}&\textbf{246.33$\pm$55.39}\\\cline{2-5}
			&CSC-$\ell_1$&0.0104$\pm$0.0012&0.0249$\pm$0.0008&715.44$\pm$93.86\\\cline{2-5}
			&$\alpha$CSC&\textbf{0.00353$\pm$0.00017}&\textbf{0.00766$\pm$0.00052}&1986.08$\pm$262.47\\\cline{2-5}
			&GCSC &\textbf{0.00343$\pm$0.00021}&\textbf{0.00762$\pm$0.00026}&\textbf{238.52$\pm$19.78}\\\hline			
			\multirow{5}{*}{Laplace noise}	
			&CSC-$\ell_2$&0.00835$\pm$0.00042&0.0114$\pm$0.0010&470.13$\pm$ 27.44\\\cline{2-5}
			&CSC-$\ell_1$&\textbf{0.00347$\pm$0.00026}&\textbf{0.00735$\pm$0.00038}&\textbf{358.64$\pm$175.40}\\\cline{2-5}
			&$\alpha$CSC &0.00692$\pm$0.00042&0.00973$\pm$0.00013&2309.94$\pm$724.53\\\cline{2-5}
			&GCSC
			&\textbf{0.00335$\pm$0.00017}&\textbf{0.00732$\pm$0.00020}&\textbf{350.76$\pm$68.33}\\\hline	
			\multirow{5}{*}{alpha-stable noise}			
			&CSC-$\ell_2$&0.0702$\pm$0.0060&0.0840$\pm$0.0091&597.83$\pm$90.70\\\cline{2-5}
			&CSC-$\ell_1$&0.0160$\pm$0.0025&0.0337$\pm$0.0047&476.24$\pm$37.92\\\cline{2-5}
			&$\alpha$CSC&\textbf{0.00416$\pm$0.00039}&\textbf{0.00821$\pm$0.00024}&2198.32$\pm$470.57\\\cline{2-5}
			&GCSC &\textbf{0.00402$\pm$0.00030}&\textbf{0.00815$\pm$0.00041}&\textbf{412.43$\pm$71.22}\\\hline		
			\multirow{5}{*}{zero-mean mixture noise}						
			&CSC-$\ell_2$&0.0321$\pm$0.0007&0.0545$\pm$0.0010&\textbf{344.08$\pm$27.44}\\\cline{2-5}
			&CSC-$\ell_1$&0.0604$\pm$0.0055&0.0849$\pm$0.0059&588.57$\pm$88.89\\\cline{2-5}
			&$\alpha$CSC &0.0114$\pm$0.0002&0.0158$\pm$0.0003&1120.70$\pm$ 463.70\\\cline{2-5}
			&GCSC 
			&\textbf{0.00531$\pm$0.00021}&\textbf{0.00971$\pm$0.00082}&\textbf{336.00$\pm$77.85}\\\hline			
			\multirow{5}{*}{nonzero-mean mixture noise}			
			&CSC-$\ell_2$&0.0732$\pm$0.0015&0.0151$\pm$0.0011&642.03$\pm$84.44\\\cline{2-5}
			&CSC-$\ell_1$&0.0670$\pm$0.0037&0.0130$\pm$0.0013&788.60$\pm$88.89\\\cline{2-5}
			&$\alpha$CSC&0.0667$\pm$0.0002&0.0127$\pm$0.0014&2882.26$\pm$907.28\\\cline{2-5}
			&GCSC &\textbf{0.00556$\pm$0.00024}&\textbf{0.00818$\pm$0.00037}&\textbf{471.40$\pm$87.90}\\\hline											
		\end{tabular}
	\end{center}
	\label{tab:syn_eval}
	\vspace{-5px}
\end{table*}
\setcounter{figure}{2}
\begin{figure*}[htb]
	\small
	\centering
	
	{\includegraphics[width=0.15\columnwidth]
		{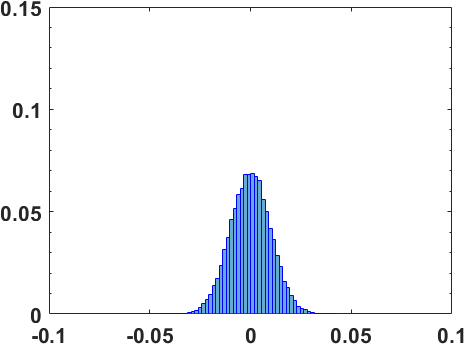}}			
	{\includegraphics[width=0.15\columnwidth]
		{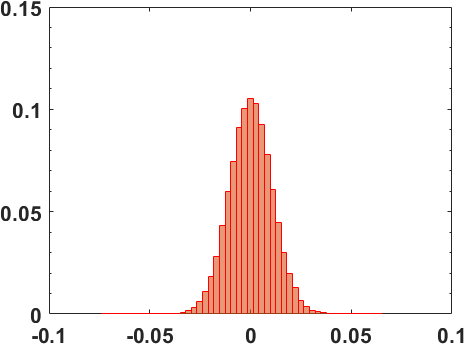}}	
	{\includegraphics[width=0.15\columnwidth]
		{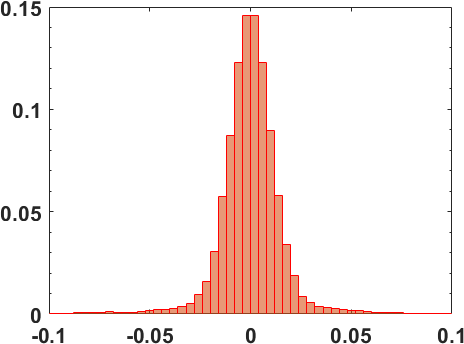}}		
	{\includegraphics[width=0.15\columnwidth]
		{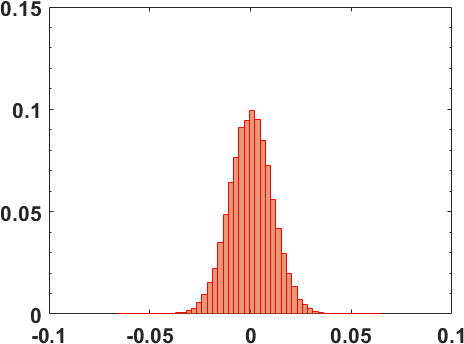}}
	{\includegraphics[width=0.15\columnwidth]
		{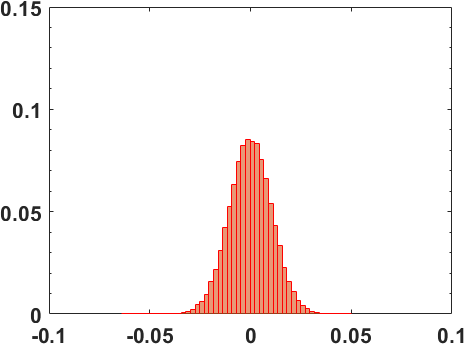}}	
	\vspace{3px}
	
	{\includegraphics[width=0.15\columnwidth]
		{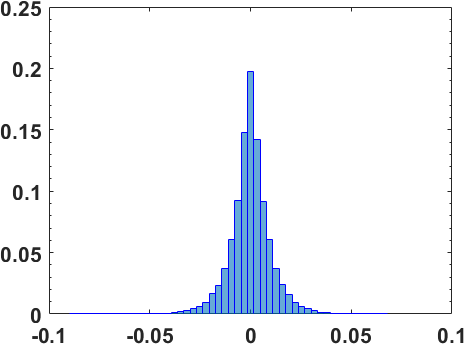}}	
	{\includegraphics[width=0.15\columnwidth]
		{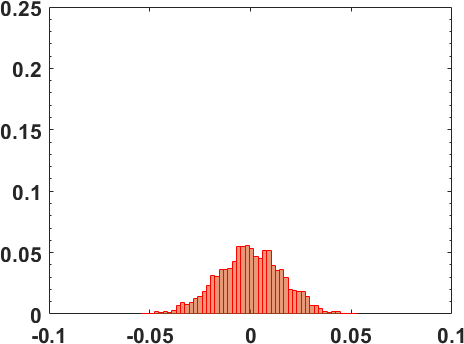}}		
	{\includegraphics[width=0.15\columnwidth]
		{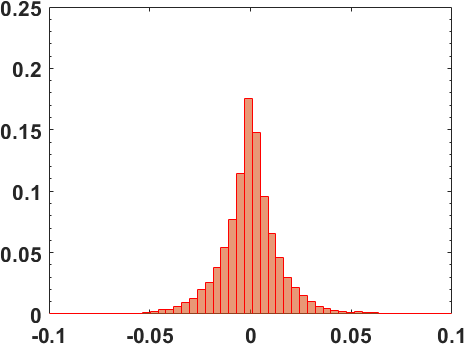}}		
	{\includegraphics[width=0.15\columnwidth]
		{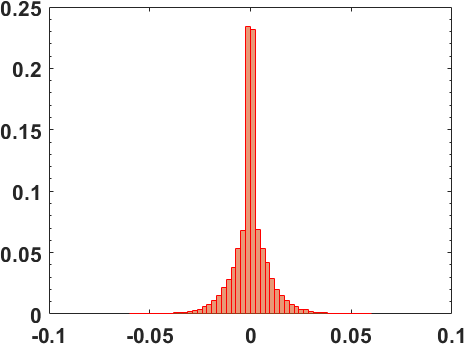}}		
	{\includegraphics[width=0.15\columnwidth]
		{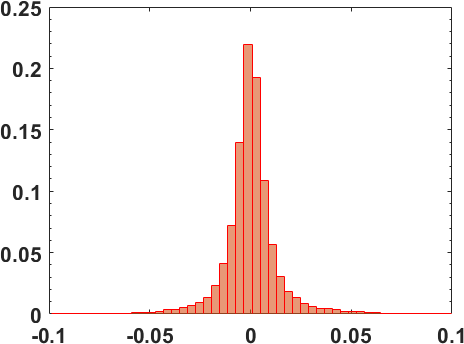}}	
	\vspace{3px}
	
	{\includegraphics[width=0.15\columnwidth]
		{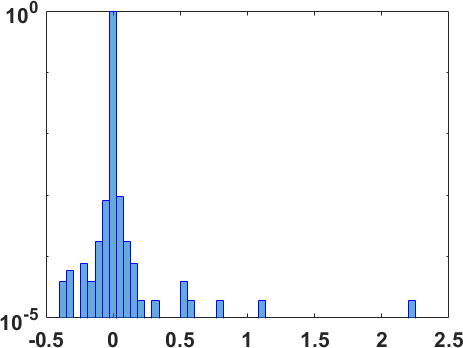}}	
	{\includegraphics[width=0.15\columnwidth]
		{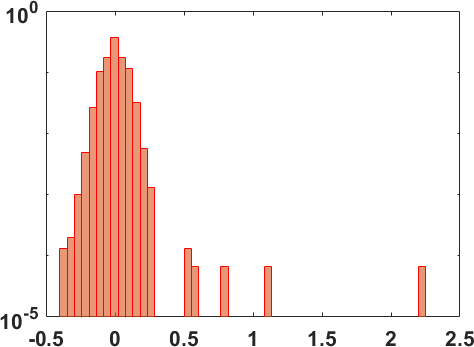}}	
	{\includegraphics[width=0.15\columnwidth]
		{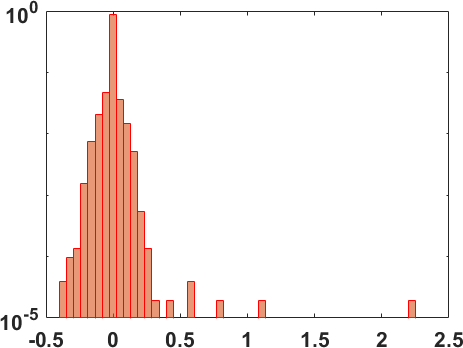}}		
	{\includegraphics[width=0.15\columnwidth]
		{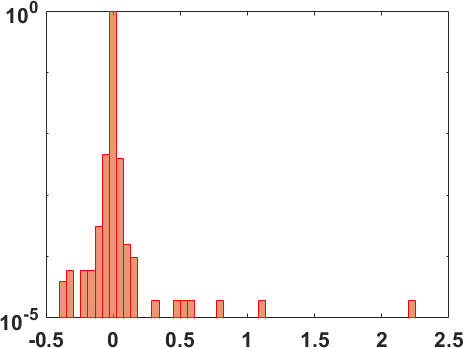}}	
	{\includegraphics[width=0.15\columnwidth]
		{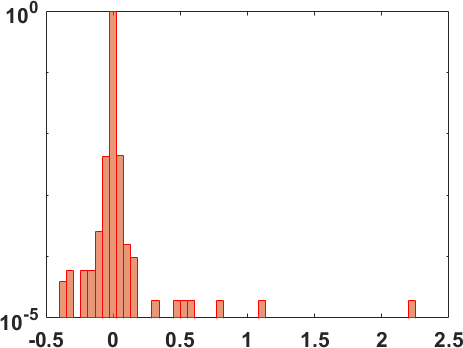}}
	\vspace{3px}
	
	{\includegraphics[width=0.15\columnwidth]
		{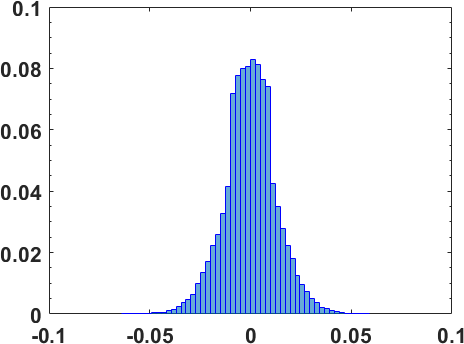}}	
	{\includegraphics[width=0.15\columnwidth]
		{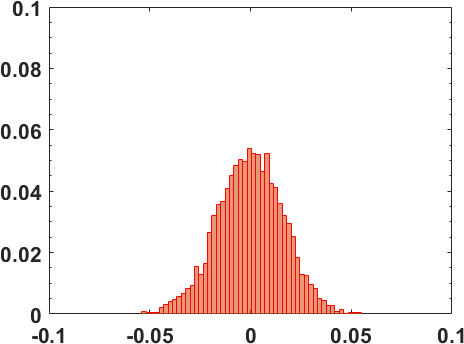}}	
	{\includegraphics[width=0.15\columnwidth]
		{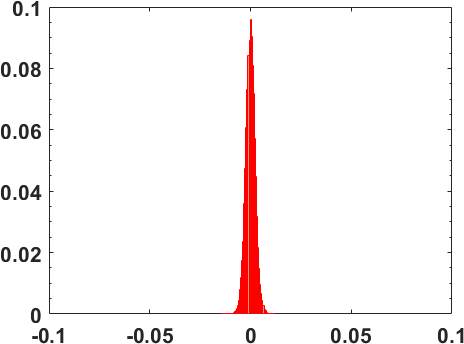}}		
	{\includegraphics[width=0.15\columnwidth]
		{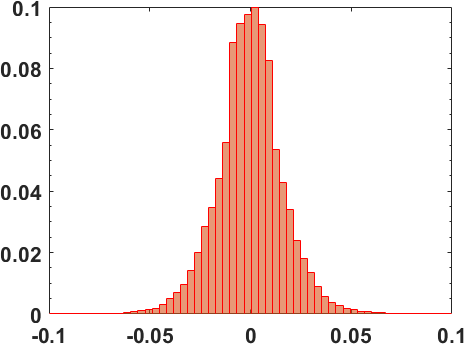}}	
	{\includegraphics[width=0.15\columnwidth]
		{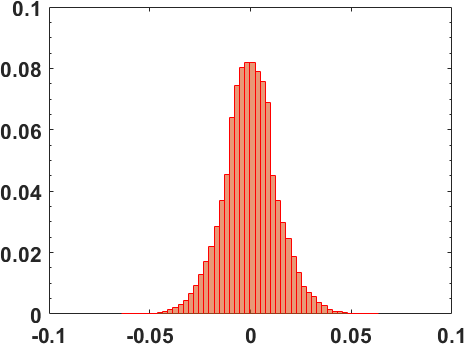}}
	\vspace{3px}

	\subfigure[Ground truth.]
	{\includegraphics[width=0.15\columnwidth]
		{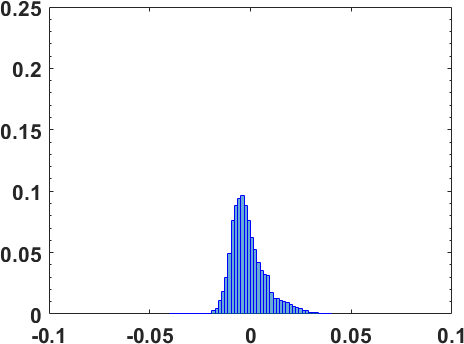}}	
	\subfigure[CSC-$\ell_2$.]
	{\includegraphics[width=0.15\columnwidth]
		{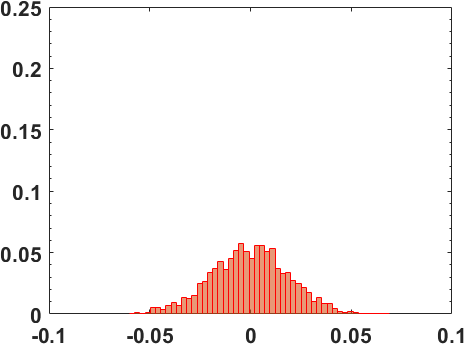}}	
	\subfigure[CSC-$\ell_1$.]
	{\includegraphics[width=0.15\columnwidth]
		{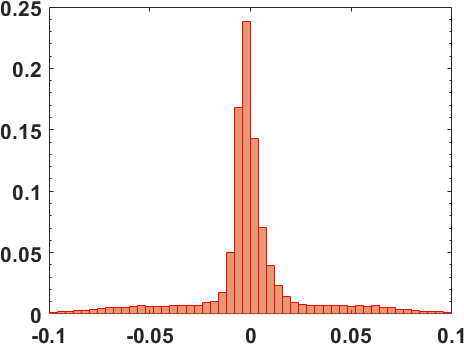}}		
	\subfigure[$\alpha$CSC.]
	{\includegraphics[width=0.15\columnwidth]
		{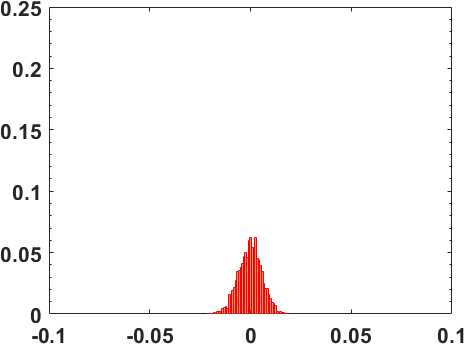}}	
	\subfigure[GCSC.]
	{\includegraphics[width=0.15\columnwidth]
		{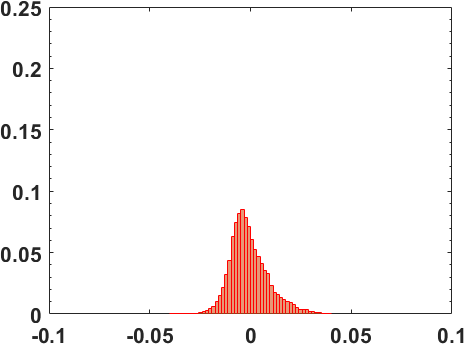}}	
	\vspace{-10px}
	\caption{Noise histograms fitted by the different models in the synthetic experiment. 
		The histograms are normalized by the number of elements.
		%as relative probability, where the sum of the bar heights is less than or equal to 1. 
		For each row, 
		Row 1: Gaussian noise;
		Row 2: Laplace noise;
		Row 3: alpha-stable noise;
		Row 4: zero-mean mixture noise,
		Row 5: nonzero-mean mixture noise.
	}
	
	\label{fig:syn_data_noise_dist}
\end{figure*}

\section{Experiments}
\label{sec:expts}

In this section, we perform experiments on both 
synthetic 
and real-world data sets.
Experiments are performed on a PC with Intel i7 4GHz CPU with 32GB memory.

\subsection{Baseline Algorithms}
\label{sec:expts_setup}

The proposed GCSC is compared with the following CSC 
state-of-the-arts:
\begin{enumerate}
\item CSC-$\ell_2$ \cite{heide2015fast}\footnote{\url{http://www.cs.ubc.ca/labs/imager/tr/2015/FastFlexibleCSC/}},
which models noise by the Gaussian distribution.
\item Alpha-stable CSC ($\alpha$CSC) \cite{jas2017learning}\footnote{\url{https://alphacsc.github.io/}}, 
which uses symmetric 
%zero-mean 
alpha-stable
distribution to
models noise by setting the parameters of alpha-stable distribution 
$\mathcal{S}(\alpha,\beta,\sigma,\mu)$ (with stability parameter $\alpha$, skewness parameter $\beta$, scale parameter $\sigma$ and position parameter $\mu$)
as $\beta=0,\sigma = \frac{1}{\sqrt{2}}$ and $\mu=\tilde{x}_i(p)$ where $\tilde{x}_i$ is defined as in \eqref{eq:x}.\footnote{In \cite{jas2017learning}, 
$\alpha$ 
is simply set to
1.2.
Here, we choose $\alpha$ by using a validation set, 
which is obtained by
randomly sampling 20\% of the samples.}
\end{enumerate}
As a further baseline, we also compare with 
CSC-$\ell_1$, a CSC variant which models noise by the Laplace distribution.  
It is formulated as the following optimization problem:
\begin{align}
\label{eq:cscl1}
\!\!\min_{\{d_k\}\in\mathcal{D},\{z_{ik}\}}
%\frac{1}{N}
\sum_{i=1}^{N}
\left( 
\frac{1}{2}
\NML{{x}_i
	\!\!-\!\!
	\sumDZ}{1}
\!\!+\!\! 
\sum_{k=1}^K
\beta\NM{z_{ik}}{1}\right) .\!\!
\end{align}
Details are in Appendix~\ref{app:csc_l1}.

We follow the automatic pruning strategy in \cite{meng2013robust}
to select the number of mixture components $G$ in 
GCSC.
We start with a relatively large $G=10$. At each
EM iteration, 
the relative difference among all Gaussians
are computed:
\begin{equation*}
\sum_{p=1}^{P}
\frac{|\sigma^2_{a}(p)-\sigma^2_{b}(p)|}{\sigma^2_{a}(p)+\sigma^2_{b}(p)},~\forall a,b\in [1,\dots, G].
\end{equation*}
For the Gaussian pair with the smallest relative difference, if this value is small
(less than 0.1), they are merged as 
\begin{eqnarray*}
	\pi_{a} &\leftarrow& \pi_{a}+\pi_{b},
	~
	\mu_{a} \leftarrow \frac{\pi_{a}\mu_{a}+\pi_{b}\mu_{b}}{\pi_{a}+\pi_{b}},\\
	\Sigma_a &\leftarrow& \diag(\sigma^2_{a}(1),\dots,\sigma^2_{a}(P)),
	~\text{where}~\sigma^2_{a}(p) =
	\frac{\pi_{a}\sigma^2_{a}(p)+\pi_{b}\sigma^2_{b}(p)}{\pi_{a}(p)+\pi_{b}(p)}.
\end{eqnarray*}

\subsubsection*{Stopping Criteria}
The optimization problems in 
$\alpha$CSC and GCSC
are solved by the EM algorithm.
We stop the EM iterations  
when the
relative change of
log posterior 
in consecutive iterations
is smaller than ${10}^{-4}$.
In the M-step, 
we stop the updating of weighted CSC (Algorithm~\ref{alg:wcsc} for GCSC, and Algorithm in Appendix B of $\alpha$CSC paper \cite{jas2017learning}) 
if the
relative change of the respective objective value 
is smaller than ${10}^{-4}$.

The optimization problems in 
CSC-$\ell_1$ and CSC-$\ell_2$ are solved by BCD. Alternating minimization 
is stopped when the
relative change of objective value (\eqref{eq:csc} for CSC-$\ell_2$ and 
\eqref{eq:cscl1} for CSC-$\ell_1$) 
in consecutive iterations
is smaller than ${10}^{-4}$.
As for the optimization subproblems of $d_k$'s (given $z_{ik}$'s) 
and $z_{ik}$'s (given $d_k$'s), we stop when the 
relative change of objective value 
is smaller than ${10}^{-4}$.

\begin{figure*}[htb]
	\small
	\centering
	{\includegraphics[width=0.15\columnwidth]
		{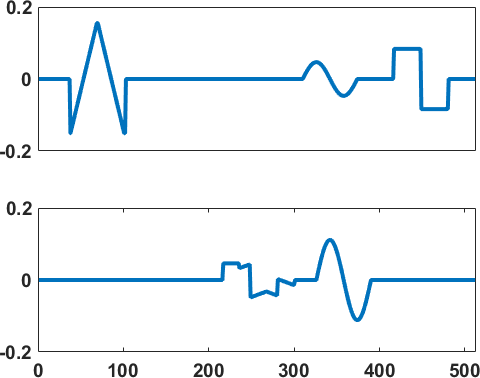}}	
	{\includegraphics[width=0.15\columnwidth]
		{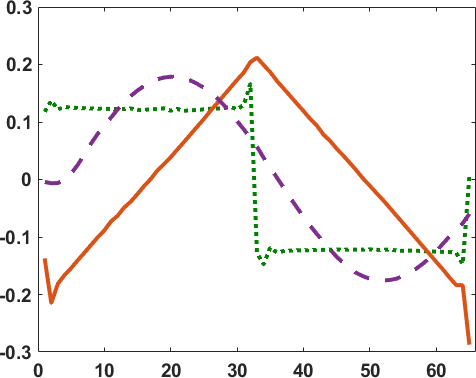}}	
	{\includegraphics[width=0.15\columnwidth]
		{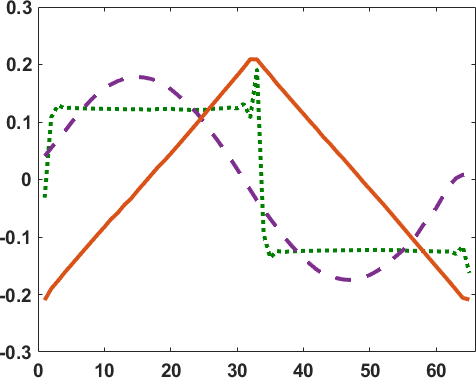}}			
	{\includegraphics[width=0.15\columnwidth]
		{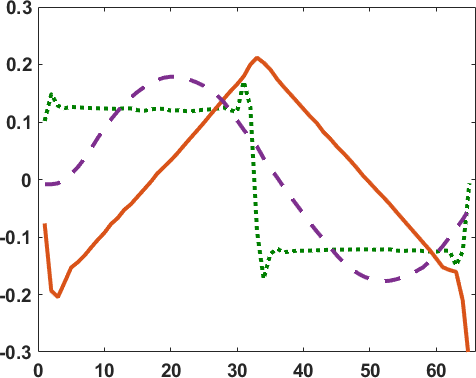}}	
	{\includegraphics[width=0.15\columnwidth]
		{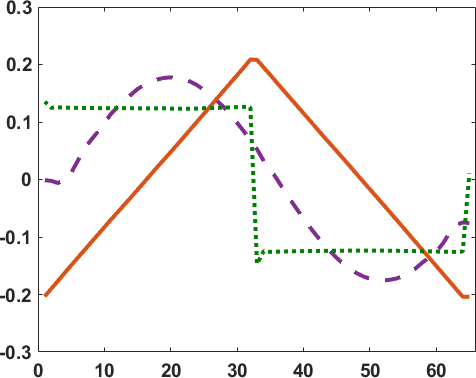}}
	
	{\includegraphics[width=0.15\columnwidth]
		{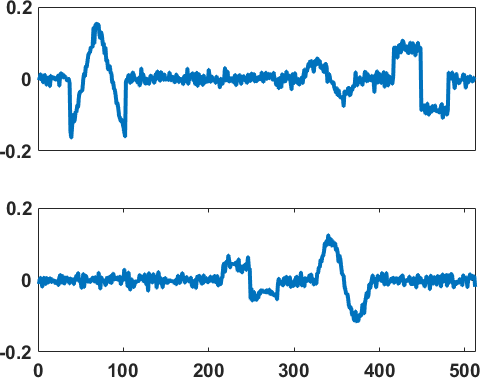}}			
	{\includegraphics[width=0.15\columnwidth]
		{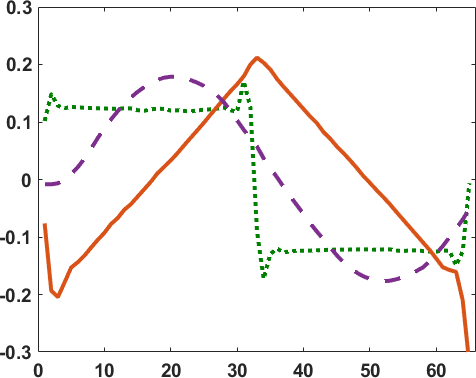}}	
	{\includegraphics[width=0.15\columnwidth]
		{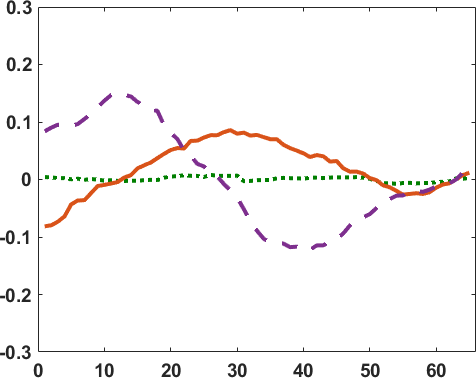}}		
	{\includegraphics[width=0.15\columnwidth]
		{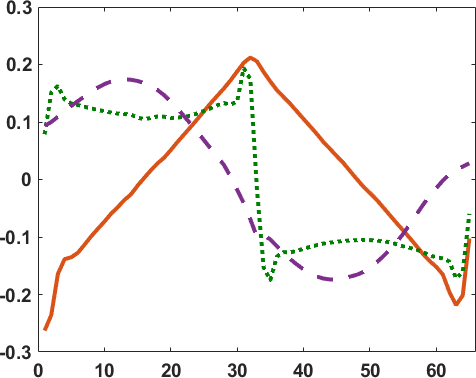}}
	{\includegraphics[width=0.15\columnwidth]
		{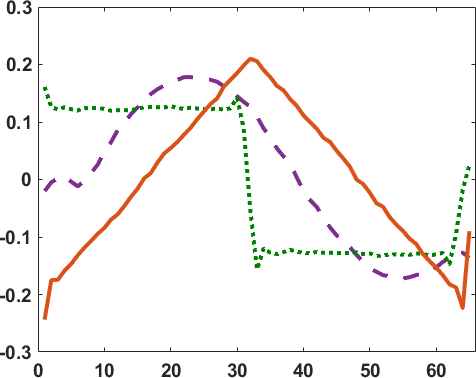}}	
	
	{\includegraphics[width=0.15\columnwidth]
		{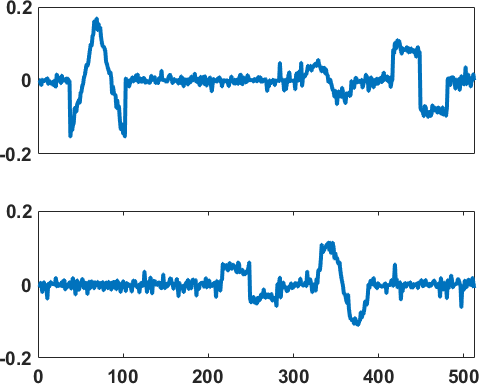}}	
	{\includegraphics[width=0.15\columnwidth]
		{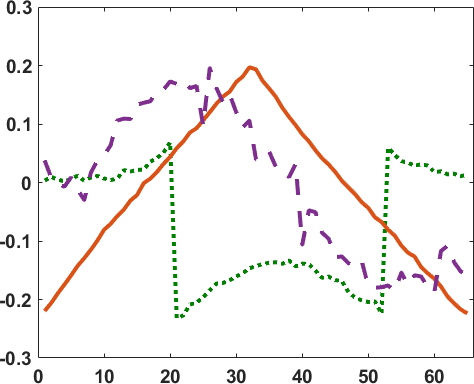}}		
	{\includegraphics[width=0.15\columnwidth]
		{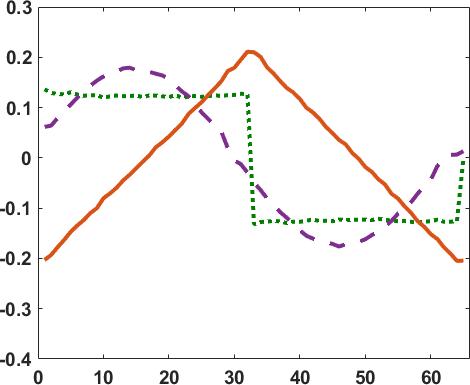}}		
	{\includegraphics[width=0.15\columnwidth]
		{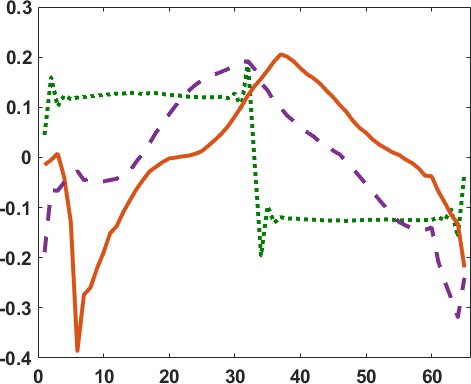}}		
	{\includegraphics[width=0.15\columnwidth]
		{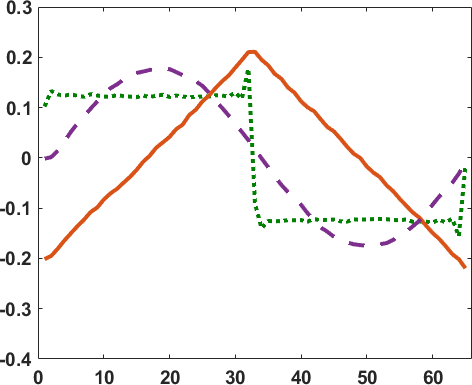}}	
	
	{\includegraphics[width=0.15\columnwidth]
		{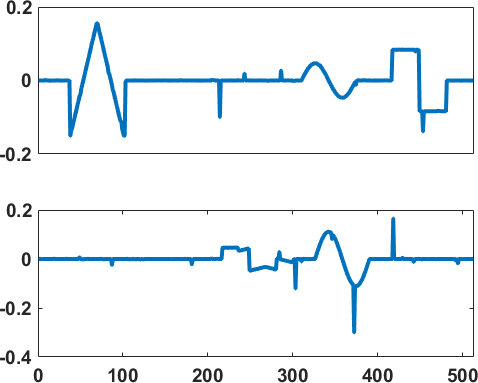}}	
	{\includegraphics[width=0.15\columnwidth]
		{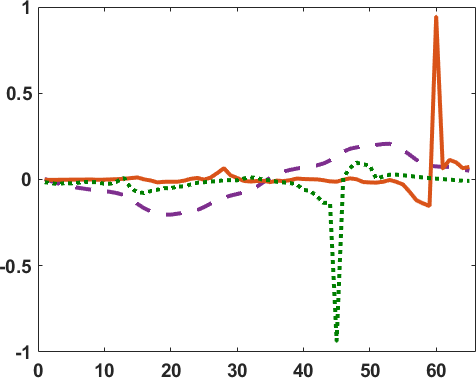}}	
	{\includegraphics[width=0.15\columnwidth]
		{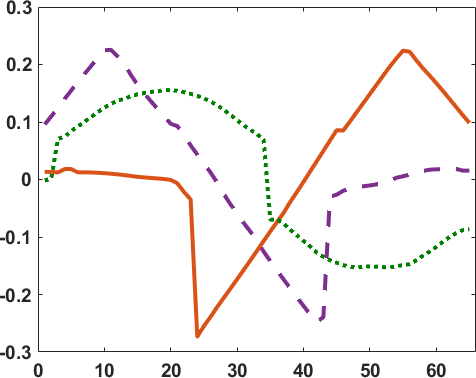}}		
	{\includegraphics[width=0.15\columnwidth]
		{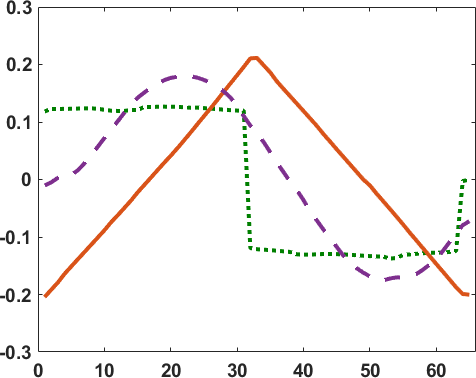}}	
	{\includegraphics[width=0.15\columnwidth]
		{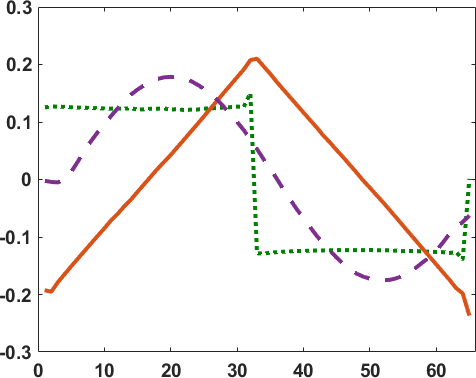}}	
	
	{\includegraphics[width=0.15\columnwidth]
		{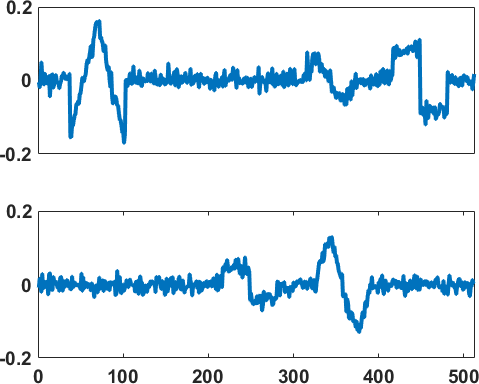}}	
	{\includegraphics[width=0.15\columnwidth]
		{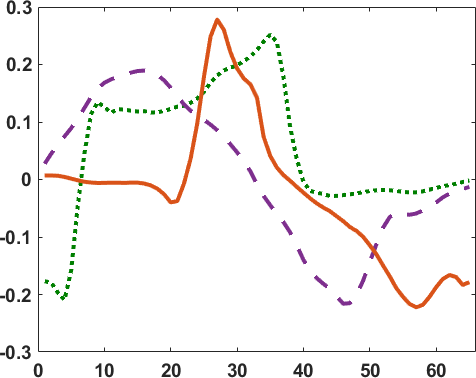}}	
	{\includegraphics[width=0.15\columnwidth]
		{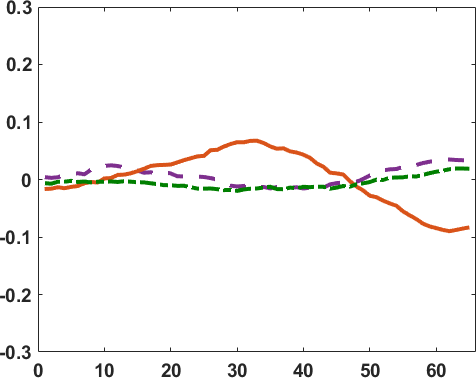}}		
	{\includegraphics[width=0.15\columnwidth]
		{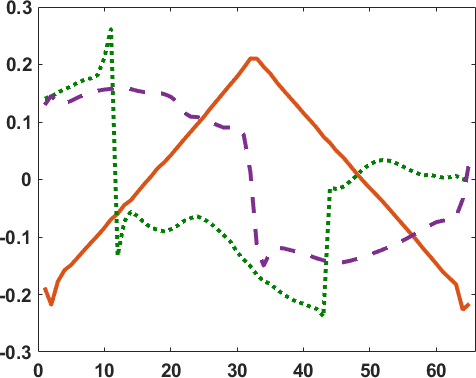}}	
	{\includegraphics[width=0.15\columnwidth]
		{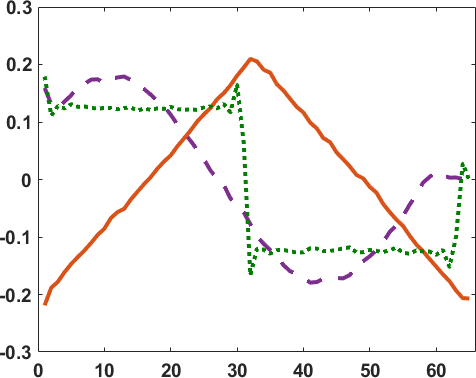}}	
	
	\subfigure[$x_i$'s.]
	{\includegraphics[width=0.15\columnwidth]
		{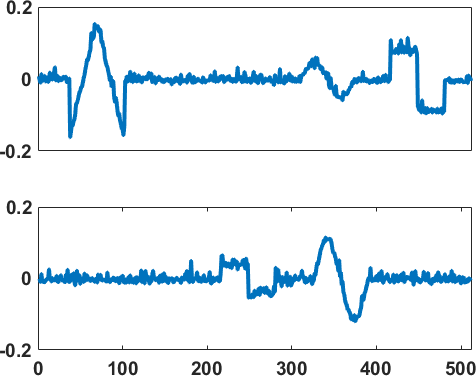}}	
	\subfigure[CSC-$\ell_2$.]
	{\includegraphics[width=0.15\columnwidth]
		{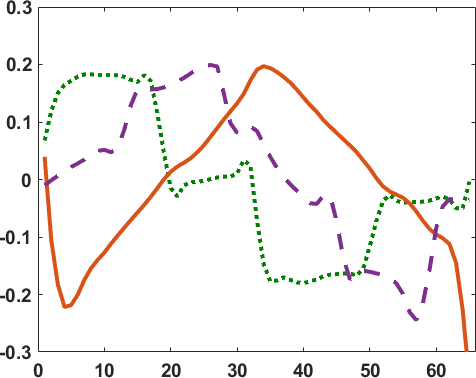}}	
	\subfigure[CSC-$\ell_1$.]
	{\includegraphics[width=0.15\columnwidth]
		{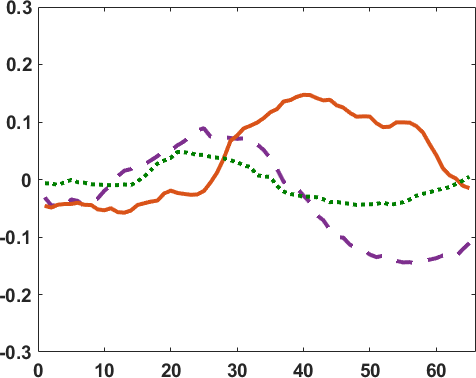}}		
	\subfigure[$\alpha$CSC.]
	{\includegraphics[width=0.15\columnwidth]
		{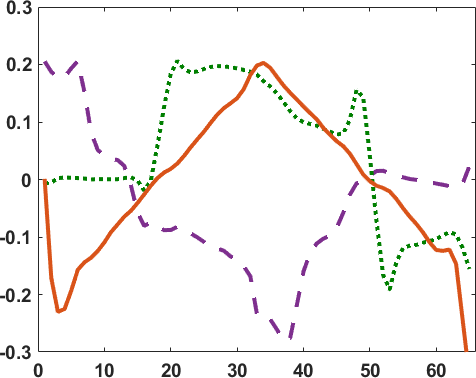}}	
	\subfigure[GCSC.]
	{\includegraphics[width=0.15\columnwidth]
		{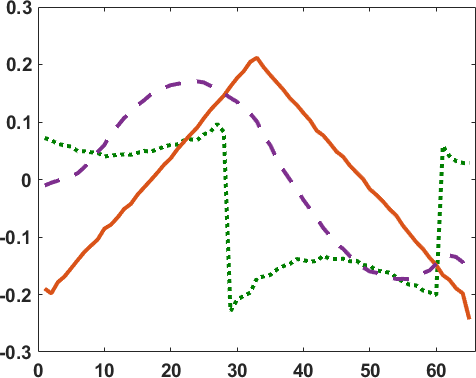}}	
	
	\vspace{-10px}
	
	\caption{Filters obtained by the different models on synthetic data. 
		Row 1: no noise;
		Row 2: Gaussian noise;
		Row 3: Laplace noise;
		Row 4: alpha-stable noise;
		Row 5: zero-mean mixture noise;
		Row 6: nonzero-mean mixture noise.
	}
	\label{fig:syn_filters}
\end{figure*}

%%%%%%%%%%%%%%%%%%%%%%%%%%%%%%%%%%%%%%%%%%%%%%%%%%%%%%%%%%%%%%%%%%%%%%

\subsection{Synthetic Data}
\label{sec:expts_syn}

In this experiment, we first demonstrate the performance on 
synthetic data.
Following \cite{jas2017learning}, we use 
$K=3$ 
filters $d_k$'s 
(triangle, 
square,
and sine),
each of length $M=65$
(Figure~\ref{fig:syn_data_filter}).
Each $d_k$ is normalized to have zero mean and unit variance.
Each $z_{ik}$
has only one nonzero entry, whose magnitude is uniformly drawn from $[0,1]$
(Figure~\ref{fig:syn_data_code}).
$N=100$ 
clean samples, each of length
$P=512$,
are generated as:
${x}^\text{clean}_i=\sumDZ$ (Figure~\ref{fig:syn_data_data}).

\setcounter{figure}{1}
\begin{figure}[ht]
\centering		
\subfigure[The 3 $d_k$'s.\label{fig:syn_data_filter}]	
{\includegraphics[width=0.15\textwidth]
{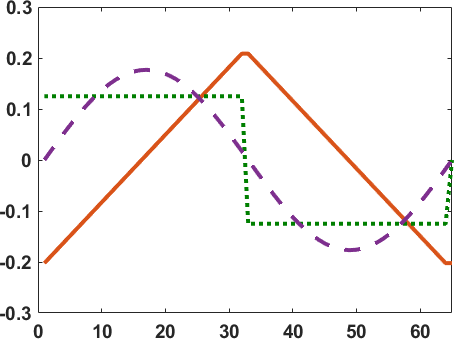}}		
\subfigure[The 3 $z_{ik}$'s.\label{fig:syn_data_code}]
	{\includegraphics[width=0.15\textwidth]
		{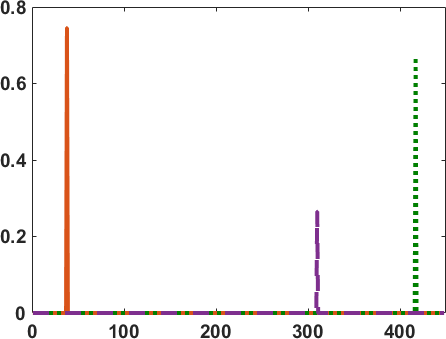}}	
	\subfigure[${x}^\text{clean}_i$.\label{fig:syn_data_data}]	
	{\includegraphics[width=0.15\textwidth]
		{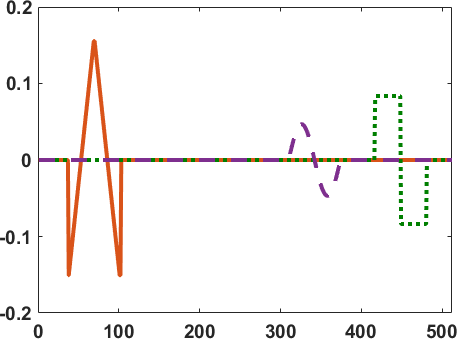}}			
	\vspace{-.1in}
\caption{Example of synthesizing a sample.}
	\label{fig:syn_data}
\end{figure}

Noise  is then 
added to generate
observations
${x}_i$'s. 
Following \cite{meng2013robust},
different types of noise are considered
(Table~\ref{tab:syn_noise}).
The alpha-stable noise we considered is Cauchy distribution,
which is one representative symmetric alpha-stable distribution apart from Gaussian and should be modeled well by $\alpha$CSC and GCSC. 

\setcounter{table}{1}	
\begin{table}[htbp]
\caption{Types of noise added to synthetic data. 
}
\centering
\vspace{-5px}
\begin{tabular}{c|c|c}
\hline
noise & distribution&  SNR (dB) \\ \hline
Gaussian & $\mathcal{N}(0,0.01^2)$ &        13.04        \\ \hline
Laplace & $\mathcal{L}(0,0.01)$ &        13.03        \\ \hline
alpha-stable & $\mathcal{S}(1,0,0.01^2,0)$  &       10.43        \\ \hline
\multirow{3}{*}{zero-mean mixture}&20\% from $\mathcal{U}(-0.01,0.01)$,&\multirow{3}{*}{10.98}\\
& 20\% from $\mathcal{N}(0,0.01^2)$,&\\
&60\% from
$\mathcal{N}(0,0.015^2)$&\\\hline
\multirow{3}{*}{nonzero-mean mixture}&20\% from $\mathcal{U}(-0.01,0.01)$,&\multirow{3}{*}{14.16}\\
& 20\% from $\mathcal{N}(0.01,0.01^2)$,&\\
&60\%  from $\mathcal{N}(-0.005,0.005^2)$&\\\hline
	\end{tabular}
	\label{tab:syn_noise}
\end{table}

\setcounter{figure}{4} 
\begin{figure*}[ht]
	\centering
	
	\subfigure[$x^{\text{clean}}_i$.]{\includegraphics[width=0.15\columnwidth]{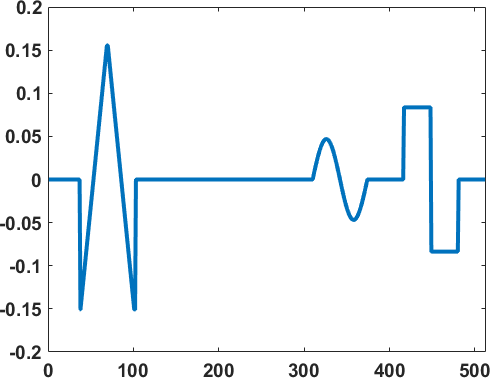}}
	\subfigure[$x_i$.]{\includegraphics[width=0.15\columnwidth]{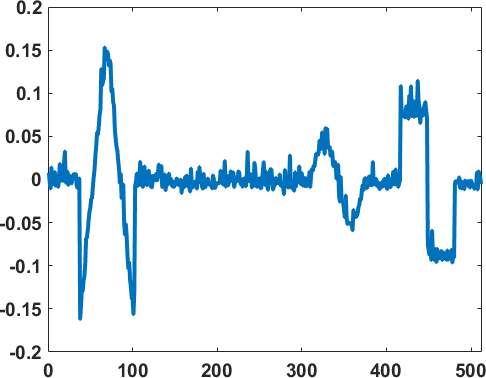}}
	\subfigure[CSC-$\ell_2$.]{\includegraphics[width=0.15\columnwidth]{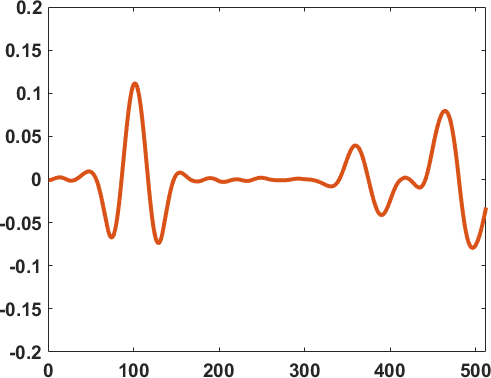}}
	\subfigure[CSC-$\ell_1$.]{\includegraphics[width=0.15\columnwidth]{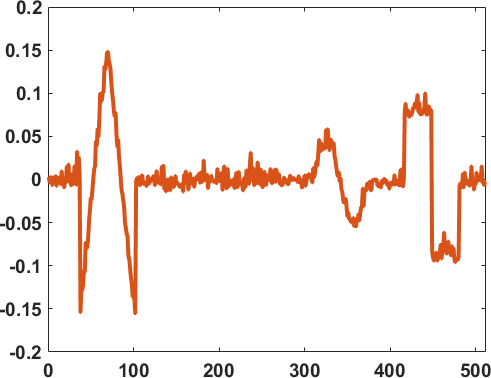}}
	\subfigure[$\alpha$CSC.]{\includegraphics[width=0.15\columnwidth]{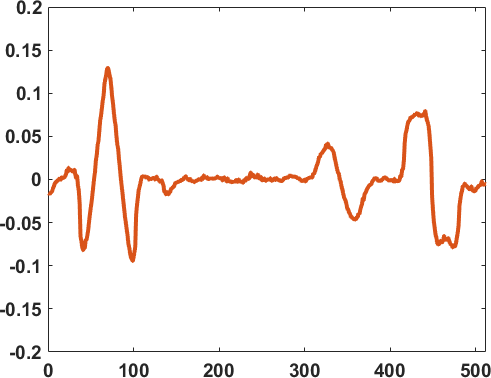}}
	\subfigure[GCSC.]{\includegraphics[width=0.15\columnwidth]{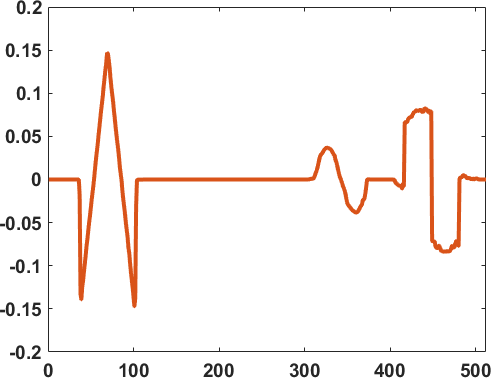}}	
	\vspace{-10px}			
	\caption{
		Reconstructions obtained by the different models on synthetic data with nonzero-mean mixture noise. 
	}
	\label{fig:syn_rec}
\end{figure*}

\subsubsection{Quantitative Evaluation}
Following \cite{meng2013robust,chen2016robust}, 
performance is evaluated by 
the mean absolute error (MAE) and root mean squared error (RMSE): 
\begin{eqnarray}\notag
\text{MAE}=&\frac{1}{NP}\sum_{i=1}^{N}\NM{{x}^\text{clean}_{i}-\tilde{x}_{i}}{1},
\\\notag
\text{RMSE}=&\sqrt{\frac{1}{NP}\sum_{i=1}^{N}\NM{{x}^\text{clean}_{i}-\tilde{x}_{i}}{2}^2},
\end{eqnarray}
where
$\tilde{x}_i= \sum_{k=1}^{K}d_k*z_{ik}$ 
is the reconstruction 
based on the obtained $d_k$'s and $z_{ik}$'s.
Results are averaged over five runs with different initializations of $d_k$'s and $z_{ik}$'s.

Results are shown in
Table~\ref{tab:syn_eval}.
When there is no noise, all methods obtain MAE and RMSE in the same order of magnitude.
When there is noise, intuitively, 
the model
whose underlying noise assumption matches the actual noise distribution
will perform the best.
Empirically, GCSC is the best or comparable with the best method on all types of noise.

As for time,
CSC-$\ell_2$ and GCSC are the fastest
in general.
CSC-$\ell_1$ is slower as it needs more auxiliary variables in ADMM to handle the nonsmooth $\ell_1$ loss (details are in Appendix~\ref{app:csc_l1}).
 $\alpha$CSC 
is the slowest, as it performs
in the spatial domain which
is costly. Moreover,  it requires expensive Markov chain Monte Carlo (MCMC) in its E-step.

\subsubsection{Visual Comparison}
Figure~\ref{fig:syn_data_noise_dist} compares the ground truth
noise with those fitted by the models.
As can be seen, 
GCSC models 
the noise well for all types of noise, 
while the other methods only model the noise well when 
its underlying noise distribution matches the actual noise.

Figure~\ref{fig:syn_filters}
shows the learned filters.
As can be seen, GCSC can 
recover the underlying filters more reliably.
Figure~\ref{fig:syn_rec} further shows the reconstructions on synthetic data with nonzero-mean noise.
GCSC is the only one that denoises well and recover the underlying the clean data.

\subsubsection{Solving \eqref{eq:wcsc_F}: niAPG vs BCD}
We first 
consider
solving \eqref{eq:wcsc_F} in the M-step of one EM iteration.
The details of BCD solver are in Appendix~\ref{app:wcsc_bcd}.
Figure~\ref{fig:niapg_bcd} shows convergence of solving \eqref{eq:wcsc_F} with time on synthetic data with nonzero-mean noise.
As shown, niAPG converges much faster than BCD.
It rapidly starts to reduce objective and converges to a smaller objective. 
Thus, using niAPG to solve the (weighted) CSC problem is a more efficient choice.
\setcounter{figure}{5} 
\begin{figure}[ht]
	\centering
	\includegraphics[width=0.3\columnwidth]{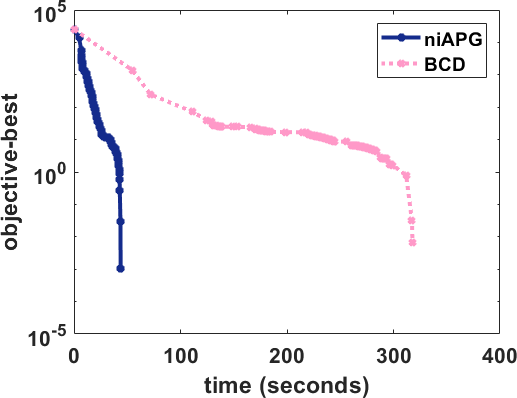}
	\caption{Convergence of niAPG and BCD on solving \eqref{eq:wcsc_F}.}
	\label{fig:niapg_bcd}
\end{figure}
Further, we show the performance of the whole GCSC with GMM loss using different solvers for \eqref{eq:wcsc_F} in the M-step in Table~\ref{tab:syn_solver}. 
Although the two solvers obtain similar MAE and RMSE, BCD solver takes much longer time than niAPG solver. 

\setcounter{table}{3}
\begin{table}[htb]
	\caption{ Performance of GCSC with different solvers for \eqref{eq:wcsc_F} on the synthetic data with nonzero-mean mixture noise.
}
	\vspace{-10px}
	\begin{center}
		\begin{tabular}{c|c|c|c}
			\hline
			&MAE&RMSE&time (seconds)\\ \hline
			BCD&\textbf{0.00557$\pm$0.00031}&\textbf{0.00821$\pm$0.00044}&2562.37$\pm$400.11\\ niAPG&\textbf{0.00556$\pm$0.00024}&\textbf{0.00818$\pm$0.00037}&\textbf{471.40$\pm$87.90}\\ \hline		
		\end{tabular}
		\vspace{-5px}
	\end{center}
	\label{tab:syn_solver}
\end{table}

%%%%%%%%%%%%%%%%%%%%%%%%%%%%%%%%%%%%%%%%%%%%%%%%%%%%%%%%%%%%%%%%%%%%%%

\subsection{Local Field Potential Data}
\label{sec:expts_lfp}

\setcounter{figure}{6} 
\begin{figure*}[ht]
	\centering				
	{\includegraphics[width=0.165\textwidth]
		{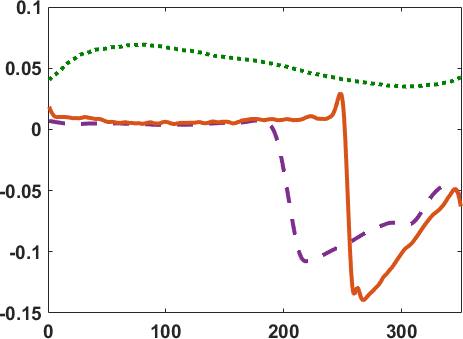}}
	{\includegraphics[width=0.165\textwidth]
		{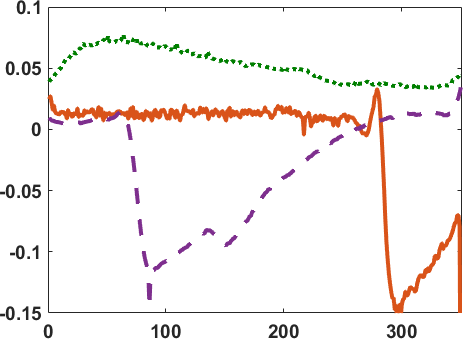}}	
	{\includegraphics[width=0.165\textwidth]
		{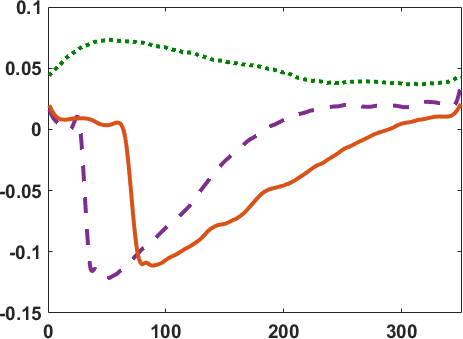}}
	{\includegraphics[width=0.165\textwidth]
		{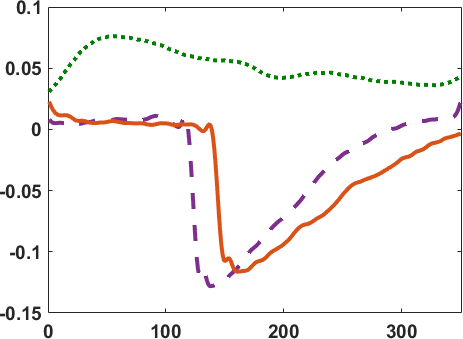}}	
			
	\subfigure[CSC-$\ell_2$.]{\includegraphics[width=0.165\textwidth]
		{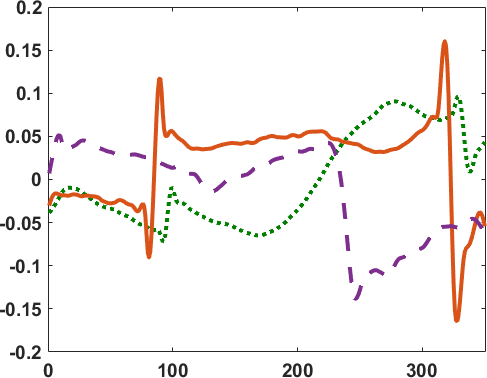}}
	\subfigure[CSC-$\ell_1$.]{\includegraphics[width=0.165\textwidth]
		{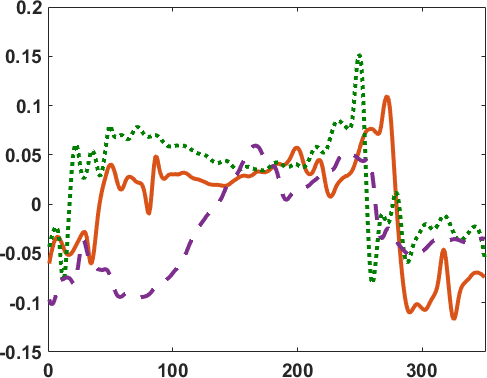}}	
	\subfigure[$\alpha$CSC.]{\includegraphics[width=0.165\textwidth]
		{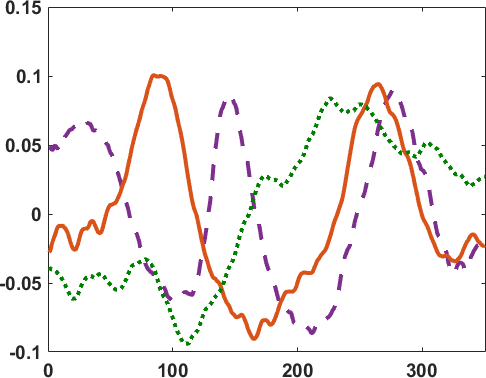}}
	\subfigure[GCSC.]{\includegraphics[width=0.165\textwidth]
		{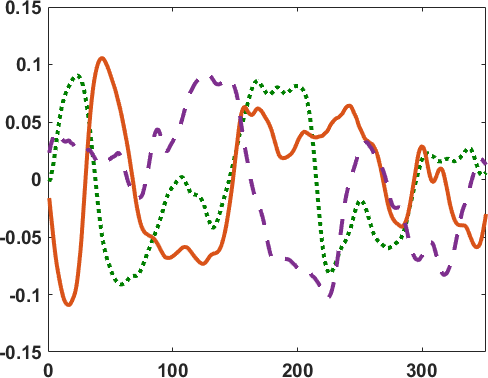}}	
	\vspace{-.1in}
	\caption{
		Filters obtained by the different models on \textit{LFP-cortical} (top) and \textit{LFP-striatal} (bottom).
	}
	\label{fig:lfp_atom}
	\vspace{-10px}
\end{figure*}

In this section, experiments are performed on
two real local field potential (LFP) data sets from \cite{jas2017learning}.
LFP is an electrophysiological signal recording the collective activities of a group of nearby neurons. It is closely related to
cognitive mechanisms such as attention, high-level visual processing and motor control. 
The first 
signal 
(\textit{LFP-cortical}) is 
recorded in the rat cortex 
\cite{hitziger2017adaptive},
while the second one
(\textit{LFP-striatal})
is recorded in the rat striatum 
\cite{dallerac2017updating}. 
Figure~\ref{fig:lfp_data} shows samples from these two data sets.
Note that \textit{LFP-striatal} contains heavier artifact as shown in the local segment.
Following \cite{jas2017learning}, we 
extract $N=100$ non-overlapping segments,
each of length $P=2500$, from each data set.
The other preprocessing steps and parameter setting 
($K=3$ and $M=350$)
are the same as in \cite{jas2017learning}.

Figure~\ref{fig:lfp_atom} shows the learned filters. 
As there is no ground truth, we can only evaluate the results qualitatively. 
For \textit{LFP cortical}, the learned filters are similar to the local regions in segments.
As for \textit{LFP striatal}, severe artifacts contaminate the filters learned by  CSC-$\ell_1$ and CSC-$\ell_2$, 
but do not prevent  
GCSC and $\alpha$CSC from learning  
filters similar in shape to the clean part of the segments. 
We also compare the time in Table~\ref{tab:lfp_time}. On both \textit{LFP-cortical} and \textit{LFP striatal}, GCSC is the fastest.

\begin{figure}[ht]
	\centering		
	\subfigure[whole signal.]
	{\includegraphics[width=0.225\textwidth]
		{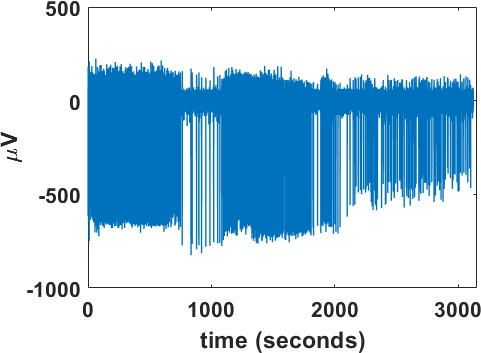}}	
	\subfigure[an expanded segment.]
	{\includegraphics[width=0.225\textwidth]
		{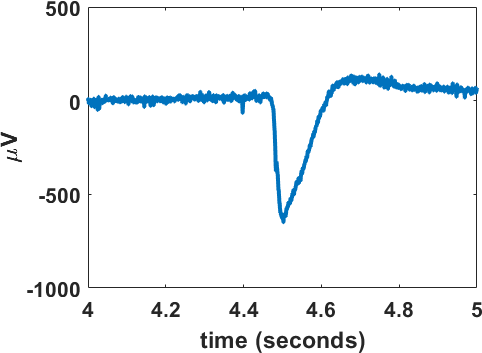}}		
	\subfigure[whole signal.]
	{\includegraphics[width=0.225\textwidth]
		{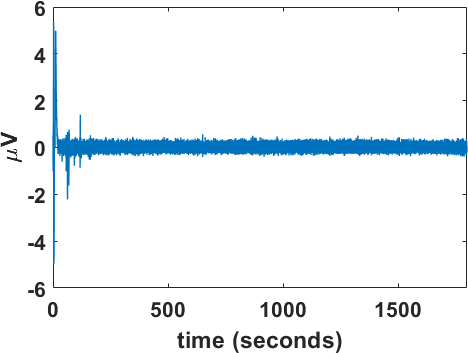}}	
	\subfigure[an expanded segment.]
	{\includegraphics[width=0.225\textwidth]
		{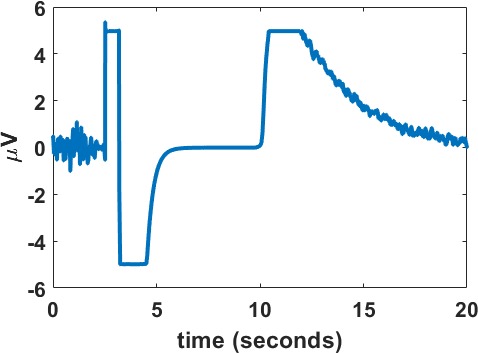}}			
	\vspace{-.1in}
	\caption{Example samples
		from \textit{LFP-cortical} (a,b) and \textit{LFP-striatal} (c,d),
		respectively. 
	}
	\label{fig:lfp_data}
\end{figure}  

 \begin{table}[htp]
	\caption{Timing results (seconds) on the \textit{LFP} data.}
	\vspace{-10px}
	\begin{center}
		\begin{tabular}{c|c|c}
			\hline
			&\textit{LFP-cortical}&\textit{LFP-striatal}\\\hline
			CSC-$\ell_2$&721.71$\pm$47.21&802.37$\pm$51.12\\\hline
			CSC-$\ell_1$&737.62$\pm$68.88&783.16$\pm$76.10\\\hline
			$\alpha$CSC&2919.33$\pm$290.77&3004.87$\pm$320.81\\\hline
			GCSC&\textbf{607.23$\pm$57.56}&\textbf{611.24$\pm$69.91}\\\hline	
		\end{tabular}
	\vspace{-5px}
	\end{center}
	\label{tab:lfp_time}
\end{table}

%%%%%%%%%%%%%%%%%%%%%%%%%%%%%%%%%%%%%%%%%%%%%%%%%%%%%%%%%%%%%%%%%%%%%%

\subsection{Retinal Image Data}
\label{sec:expts_retina}

In this section, we perform vessel segmentation via pixelwise classification of retinal image data sets. 
The pixel on the retinal vessels is classified as 1 while the pixel on the background is classified as 0.
Two popular 
retinal image 
data sets, \textit{DRIVE} \cite{staal2004ridge} and \textit{STARE} \cite{hoover2000locating} obtained from 
\cite{becker2013supervised}, are used.
\textit{DRIVE} contains 
40 images of size $584\times 565$ 
and
\textit{STARE} contains 20 images of size
$605\times 700$.
Training and testing images are split in half as in \cite{becker2013supervised}. 
Both data sets are provided with 
manual segmentation results from
two experts.
Following \cite{sironi2015learning,becker2013supervised}, we use the first expert's segmentation as ground truth.

The proposed GCSC is compared with 
CSC-$\ell_2$, CSC-$\ell_1$ and $\alpha$CSC 
(all with $K=50$ and $M=11\times 11$).
Following \cite{becker2013supervised},
%\footnote{\url{https://sites.google.com/site/carlosbecker/resources/gradient-boosting-boosted-trees}},
each pixel, represented by the $K$ learned codes,
is classified using gradient boosting \cite{friedman2001greedy} 
with 500 weak learners.
From each image,
15,000 vessel pixels are sampled
as positive, and 15,000 background pixels are sampled as negative.
As further baselines,
we compare with 
the provided second expert's
manual segmentation 
(denoted ``Expert") and
the state-of-the-art handcrafted
multi-scale Hessian filter (denoted
``Hessian")\footnote{\url{https://www.mathworks.com/matlabcentral/fileexchange/63171-jerman-enhancement-filter}}
\cite{jerman2016enhancement}.
The experiment is repeated five times.

Figure~\ref{fig:prroc_curve} shows
the 
Receiver Operating Characteristic (ROC) 
and 
Precision-Recall (PR) 
curves.
Table~\ref{tab:vessel_eval} shows 
the corresponding Area Under ROC Curve
(AUC)
and the best
F-score. 
As can be seen, GCSC outperforms all the other methods. 
$\alpha$CSC performs slightly better than CSC-$\ell_1$ and CSC-$\ell_2$.
The multi-scale Hessian filter 
is much worse.
The classification performance of
Expert is low,
which is also noted in \cite{sironi2015learning}.

Figures~\ref{fig:vessel_drive} and \ref{fig:vessel_stare}
show the segmentation results from a test image from \textit{DRIVE} and
\textit{STARE}, respectively.
As can be seen, 
the segmentation results produced by $\alpha$CSC, CSC-$\ell_2$ and CSC-$\ell_1$ 
are still noisy. The
Hessian filter shows clearer vessels, but enlarges the pupil and shrinks some tiny vessels.
In contrast, GCSC obtains cleaner segmented vessels.

\begin{table}[htb]
	\centering
	\small
	
	\caption{ Performance on the retinal image data sets. }
	\vspace{-5px}
	\begin{tabular}{ c  c| c|c }
		\hline
		
		&      & AUC             & best F-score      \\ \hline
	%	&Image       & 0.2641 $\pm$ 0.0000          & 0.1829 $\pm$ 0.0000                    \\\cline{2-4}
	\multirow{6}{*}{\textit{DRIVE}} 	&Expert & -               & 0.8935  $\pm$ 0.0000                     \\ \cline{2-4}
		&Hessian      & 0.7314 $\pm$ 0.0000          & 0.8755  $\pm$ 0.0000                    \\ \cline{2-4}
		&CSC-$\ell_2$ & 0.9044 $\pm$ 0.0067         & 0.9806 $\pm$ 0.0065                     \\ \cline{2-4}
		&CSC-$\ell_1$ & 0.9383 $\pm$ 0.0071         & 0.9821  $\pm$ 0.0063                    \\ \cline{2-4}
		&$\alpha$CSC & 0.9401 $\pm$ 0.0051         & 0.9850  $\pm$ 0.0064                    \\ \cline{2-4}
		&GCSC      & \textbf{0.9504$\pm$ 0.0048} & \textbf{0.9969 $\pm$ 0.0066}    \\ \hline
		\multirow{6}{*}{\textit{STARE}}
	%	&Image       &     0.3327 $\pm$ 0.0000      & 0.1548 $\pm$ 0.0000           \\ \cline{2-4}
		&Expert &        -        & 0.7790 $\pm$ 0.0000           \\ \cline{2-4}
		&Hessian      &     0.6623 $\pm$ 0.0000      & 0.8495  $\pm$ 0.0000          \\ \cline{2-4}
		&CSC-$\ell_2$ &     0.9033 $\pm$ 0.0089     & 0.9838 $\pm$ 0.0080          \\ \cline{2-4}
		&CSC-$\ell_1$ &     0.8964 $\pm$ 0.0087     & 0.9757$\pm$ 0.0073           \\ \cline{2-4}
		&$\alpha$CSC & 0.9101 $\pm$ 0.0056         & 0.9907  $\pm$ 0.0062                    \\ \cline{2-4}
		&GCSC      & \textbf{0.9203$\pm$ 0.0066} & \textbf{0.9999$\pm$ 0.0065}  \\ \hline		
	\end{tabular}
	\vspace{-5px}
	\label{tab:vessel_eval}
\end{table}

\begin{figure}[ht]
	\centering
	\subfigure[\textit{DRIVE}.  \label{fig:drive_roc}]{\includegraphics[width=0.225\columnwidth]{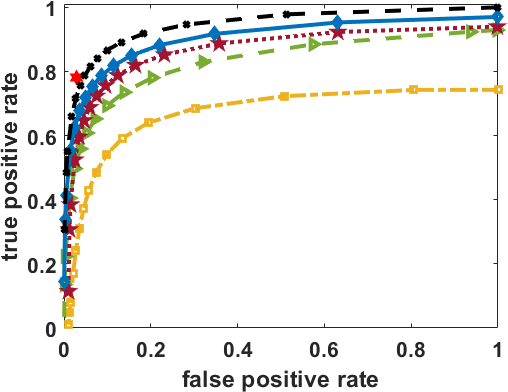}}
	\subfigure[\textit{STARE}.  \label{fig:stare_roc}]{\includegraphics[width=0.225\columnwidth]{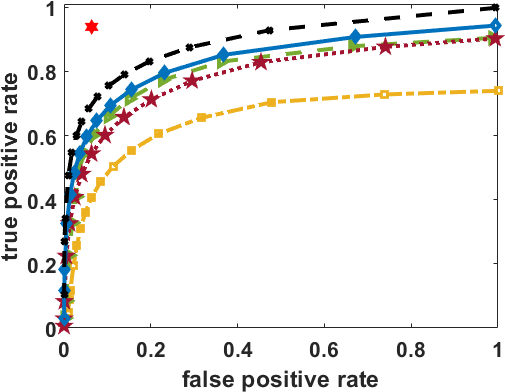}}
	\subfigure[\textit{DRIVE}.  \label{fig:drive_pr}]{\includegraphics[width=0.225\columnwidth]{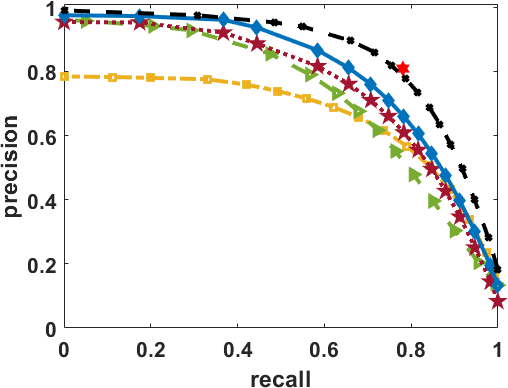}}
	\subfigure[\textit{STARE}.  \label{fig:stare_pr}]{\includegraphics[width=0.225\columnwidth]{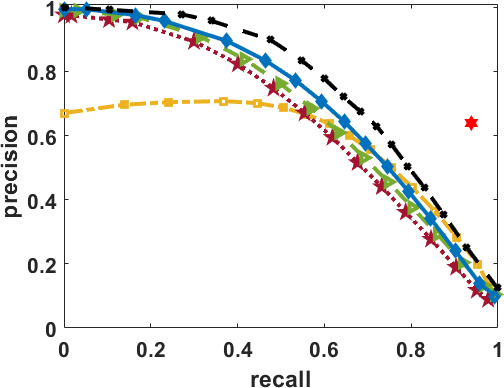}}
	%\label{fig:pr_curve}
	\includegraphics[width=0.45\columnwidth]{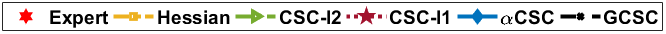}				
	\caption{
		ROC (a,b) and 
		PR (c,d) curves on the retinal image data sets. }
	\label{fig:prroc_curve}
	\vspace{-10px}
\end{figure}

\begin{figure*}[htb]
	\centering		
	\subfigure[Image.
	\label{fig:drive_ori_img}]{\includegraphics[width=0.24\textwidth]
		{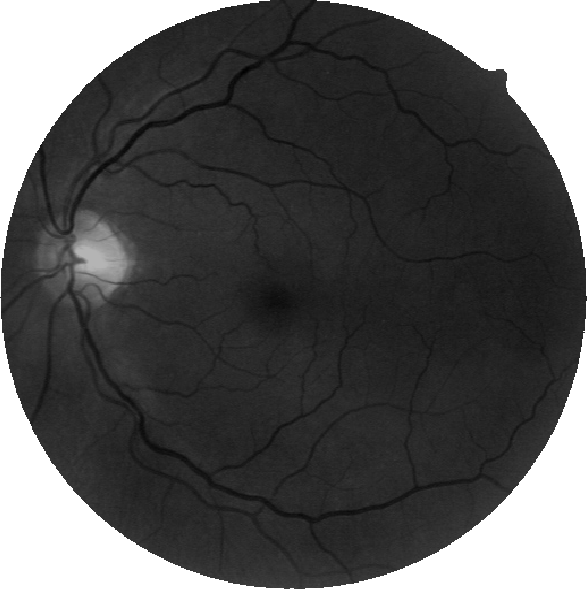}}
	\subfigure[Ground truth.
	\label{fig:drive_gt}]{\includegraphics[width=0.24\textwidth]
		{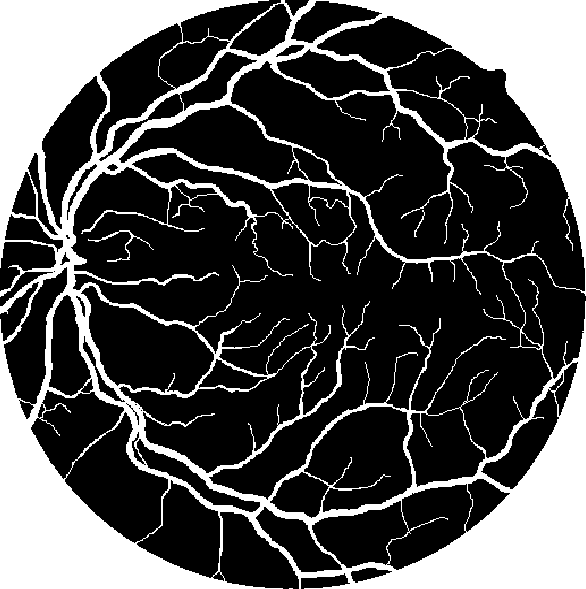}}
	\subfigure[Expert.
	\label{fig:drive_gt2}]{\includegraphics[width=0.24\textwidth]
		{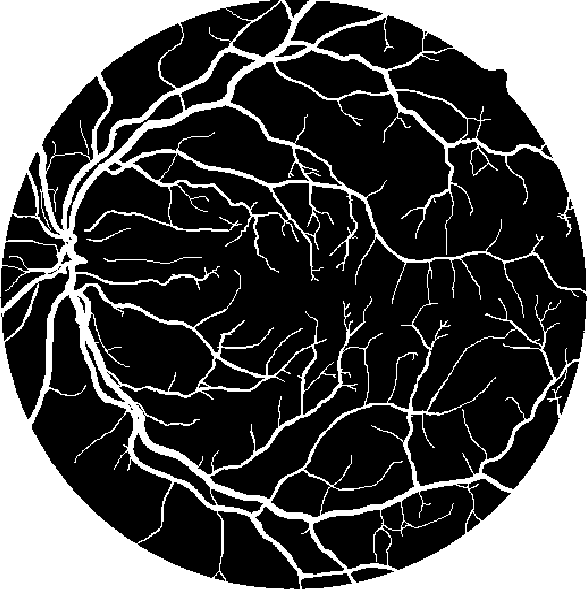}}
	\subfigure[Hessian.
	\label{fig:drive_jerman}]{\includegraphics[width=0.24\textwidth]
		{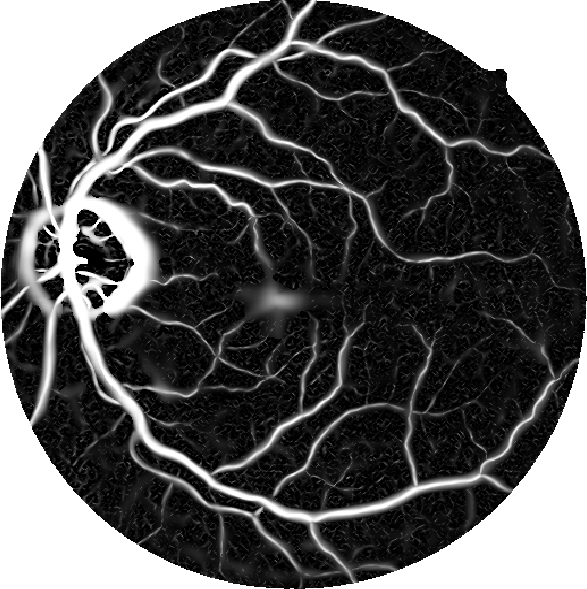}}	
	\subfigure[CSC-$\ell_2$.
	\label{fig:drive_cscl2}]{\includegraphics[width=0.24\textwidth]
		{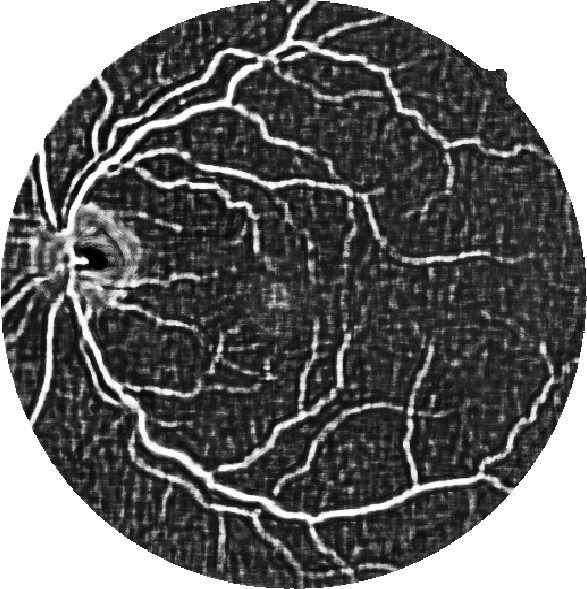}}
	\subfigure[CSC-$\ell_1$.
	\label{fig:drive_cscl1}]{\includegraphics[width=0.24\textwidth]
		{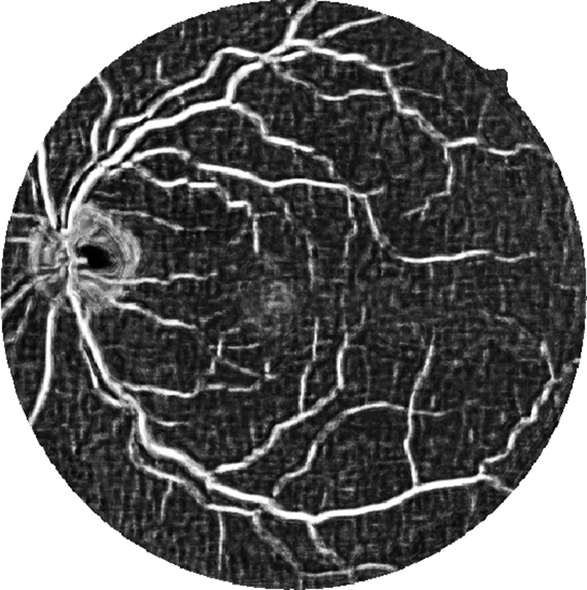}}
	\subfigure[$\alpha$CSC.
	\label{fig:drive_alphacsc}]{\includegraphics[width=0.24\textwidth]
		{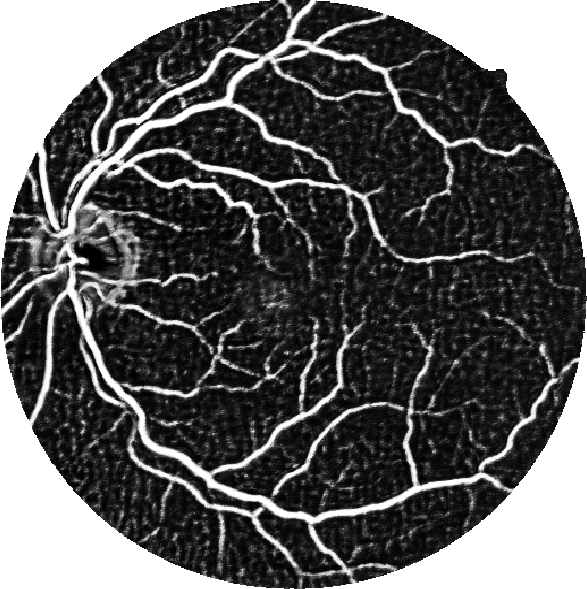}}			
	\subfigure[GCSC.
	\label{fig:drive_gcsc}]{\includegraphics[width=0.24\textwidth]
		{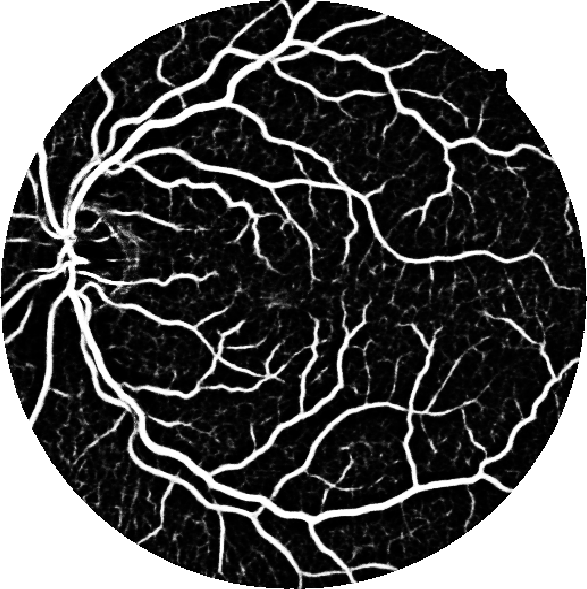}}
	\vspace{-10px}		
	\caption{Vessel segmentation results on a test retinal image from \textit{DRIVE}.}
	%Within each row, from left to right: original image (a), segmentation ground truth (b), segmentation annotated by expert 2 (c), image enhanced by Hessian (d), classification results of CSC-$\ell_2$ (e), CSC-$\ell_1$ (f), and GCSC (g).}
	\label{fig:vessel_drive}
\end{figure*}

\begin{figure*}[htb]
	\centering		
	\subfigure[Image.
	\label{fig:stare_ori_img}]{\includegraphics[width=0.24\textwidth]
		{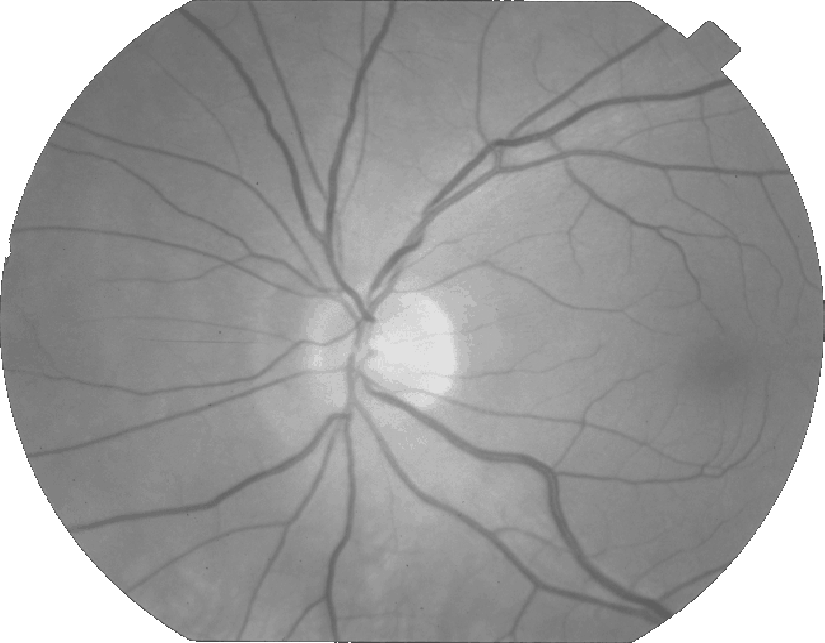}}
	\subfigure[Ground truth.
	\label{fig:stare_gt}]{\includegraphics[width=0.24\textwidth]
		{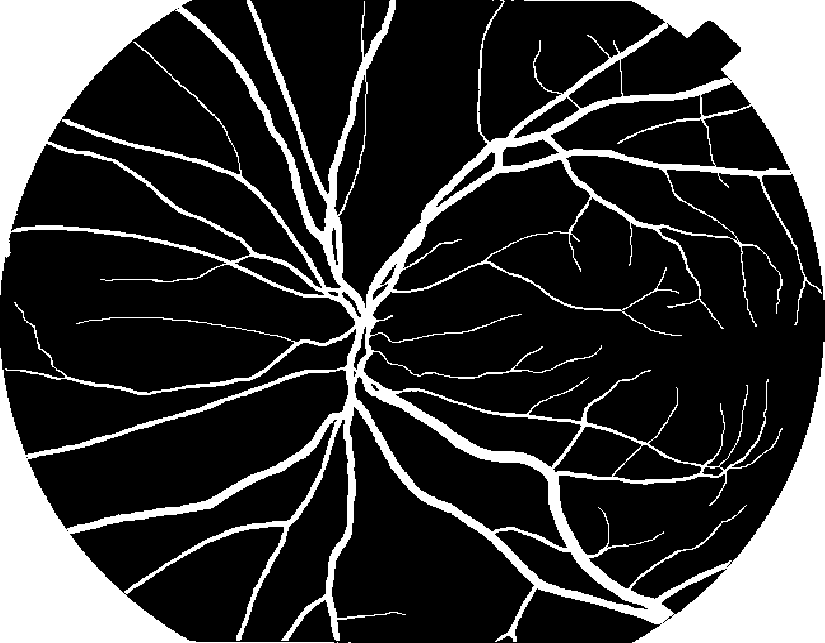}}
	\subfigure[Expert.
	\label{fig:stare_gt2}]{\includegraphics[width=0.24\textwidth]
		{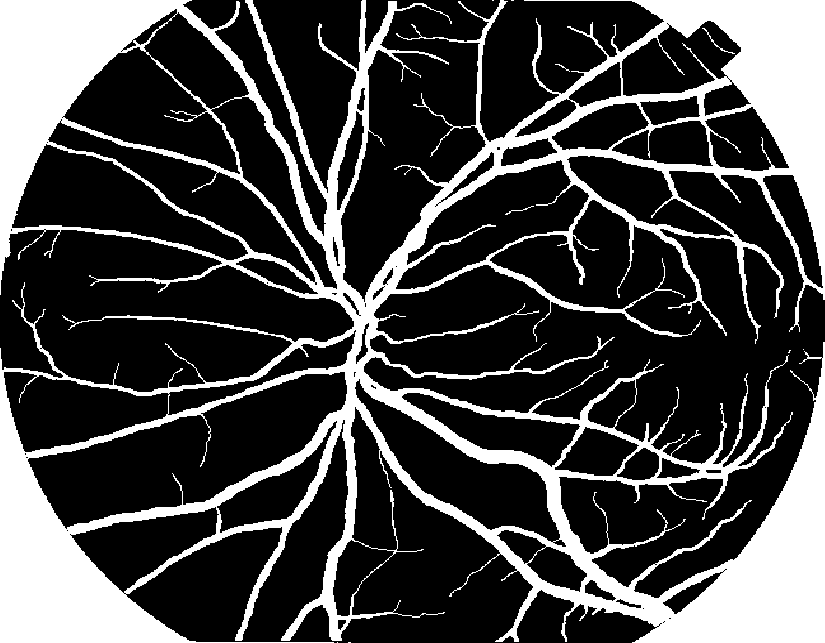}}
	\subfigure[Hessian.
	\label{fig:stare_jerman}]{\includegraphics[width=0.24\textwidth]
		{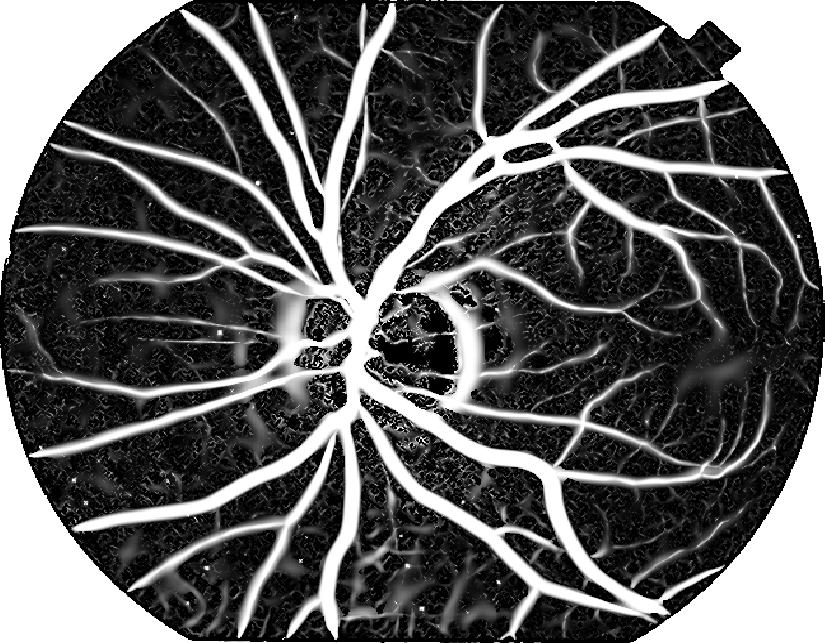}}	
	\subfigure[CSC-$\ell_2$.
	\label{fig:stare_cscl2}]{\includegraphics[width=0.24\textwidth]
		{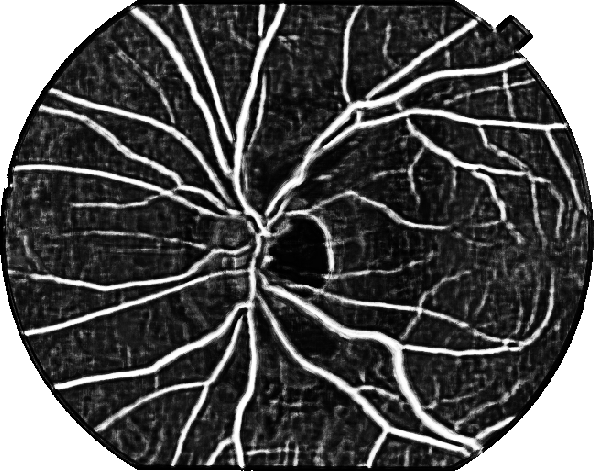}}
	\subfigure[CSC-$\ell_1$.
	\label{fig:stare_cscl1}]{\includegraphics[width=0.24\textwidth]
		{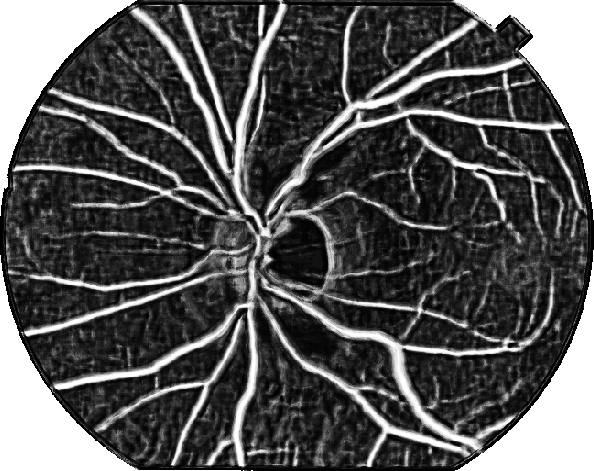}}
	\subfigure[$\alpha$CSC.
	\label{fig:stare_alphacsc}]{\includegraphics[width=0.24\textwidth]
		{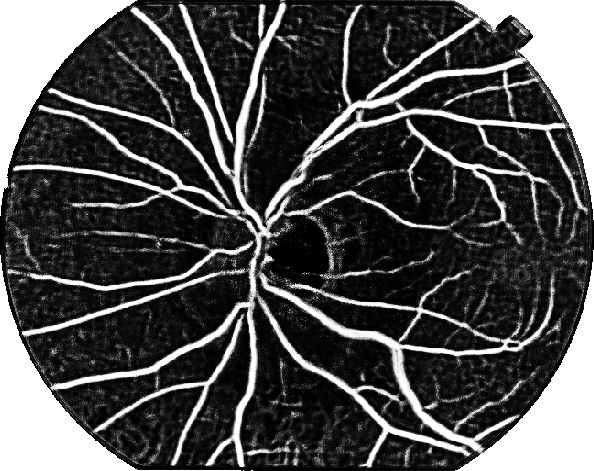}}			
	\subfigure[GCSC.
	\label{fig:stare_gcsc}]{\includegraphics[width=0.24\textwidth]
		{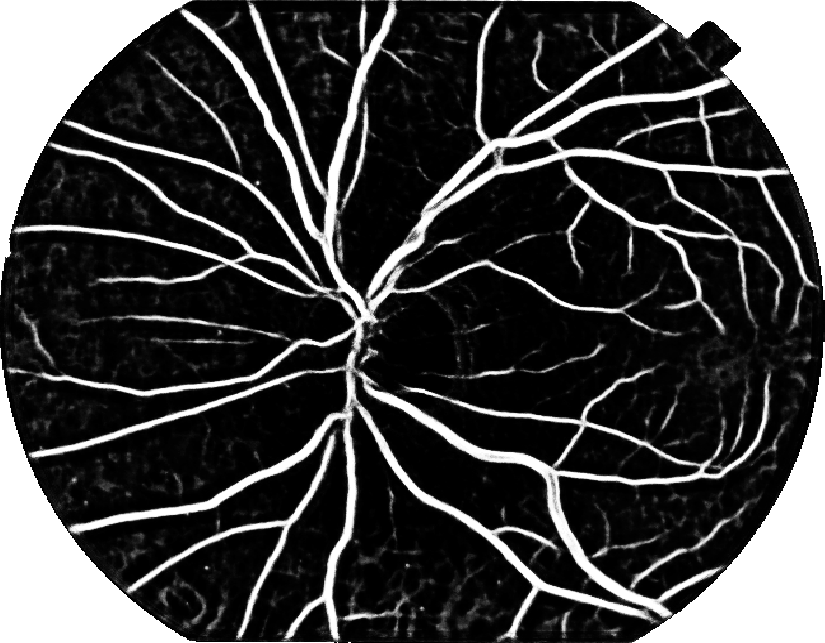}}		
	\vspace{-10px}
	\caption{Vessel segmentation results on a test retinal image from \textit{STARE}.}
	%Within each row, from left to right: original image (a), segmentation ground truth (b), segmentation annotated by expert 2 (c), image enhanced by Hessian (d), classification results of CSC-$\ell_2$ (e), CSC-$\ell_1$ (f), and GCSC (g).}
	\label{fig:vessel_stare}
\end{figure*}
%%%%%%%%%%%%%%%%%%%%%%%%%%%%%%%%%%%%%%%%%%%%%%%%%%%%%%%%%%%%%%%%%%%%%%

\section{Conclusion}

In this paper, we propose a CSC method which is able to deal with various kinds of noises.
We model the noises by Gaussian mixture model, and solve it by expectation-maximization algorithm. 
In the maximization step, the problem reduces to be a weighted CSC problem and we use a nonconvex and inexact accelerated proximal gradient algorithm without alternating.
Extensive experiments on synthetic and real noisy biomedical data sets
show that our method can model the complicated noises well and in turn obtain high-quality filters and representation.

\bibliographystyle{IEEEtran}
\bibliography{paper}

\cleardoublepage
\newpage

\appendices
%%%%%%%%%%%%%%%%%%%%%%%%%%%%%%%%%%%%%%%%%%%%%%%%%%%%%%%%%%%%%%%%%%%%%%

\section{CSC-$\ell_1$: Detailed Algorithm}
\label{app:csc_l1}

In robust learning, the $\ell_1$ loss,
which corresponds to the Laplace distribution, 
is often used to handle outliers.
Here, we present the detailed algorithm for CSC with $\ell_1$ loss, which is called CSC-$\ell_1$ in Section~\ref{sec:expts}.

The objective of CSC-$\ell_1$ is: 
\begin{align}
\label{eq:l1}
\min_{\{d_k\},\{z_{ik}\}}&
%\frac{1}{N}
\sum_{i=1}^{N}
\left( 
\frac{1}{2}
\NML{{x}_i
\!-\!
	\sumDZ}{1}
\!+\!
\sum_{k=1}^K
\beta\NM{z_{ik}}{1}\right) 
\\\notag
\text{s.t.}&
\|{d}_k\|_2^2 \le 1, k =1,\dots,K. 
\end{align}
$d_k$'s and $z_{ik}$'s 
are updated by BCD  
until convergence.

%%%%%%%%%%%%%%%%%%%%%%%%%%%%%%%%%%%%%%%%%%%%%%%%%%%%%%%%%%%%%%%%%%%%%%

\subsection{Dictionary Update}

Given $z_{ik}$'s, 
dictionary $d_k$'s are obtained by reformulating (\ref{eq:l1}) as:
\begin{eqnarray*}
\notag
&\min_{\{e_{i}\},\{d_k,v_k\}}&
%\frac{1}{N}
\sum_{i=1}^{N}
%\frac{1}{2}
\| e_{i}\|_1
\\\notag
&\text{s.t.}&
\|{v}_k\|_2^2 \le 1, \forall k, \\\notag
&&
e_{i}={x}_i-\sumDZ,  \forall i,\\\notag
&&
d_k = v_k , \forall k,
\end{eqnarray*}
where  
$e_{i}$'s and $v_k$'s are auxiliary variables.
This can then be solved by ADMM. 
We first form the augmented Lagrangian as
\begin{align}
	\notag
	\min_{\{e_{i},\alpha_{i}\},\{d_k,v_k,\theta_k\}}&
%	\frac{1}{N}
	\sum_{i=1}^{N}
	\left(
	%\frac{1}{2}
	\| e_{i}\|_1
	+ \alpha_{i}^\top
	\left(e_{i}-{x}_i+\sumDZ\right) +\frac{\rho}{2}\NML{e_{i}-{x}_i+\sumDZ}{2}^2\right)\\\notag
	&+ 
	\sum_{k=1}^{K}
	\left( 
	\theta_k^\top
	(d_k - v_k)
	+ \frac{\rho}{2}
	\NM{d_k - v_k}{2}^2	\right) 	
	\\\notag
	\text{s.t.}&
	\|{v}_k\|_2^2 \le 1, \forall k,
\end{align}
where 
$\rho$ is the ADMM penalty parameter, $\theta_k$'s and $\alpha_{i}$'s are dual variables.
At $\tau$th iteration, ADMM alternately updates 
$d^\tau_k$'s, $e_i^\tau$'s, $v^\tau_k$'s, $\alpha^{\tau}_{i}$'s and $\theta^{\tau}_k$'s until convergence.

$d^{\tau}_k$'s are updated as
\begin{align}\notag
\lefteqn{
\arg\min_{\{d_k\}}
%\frac{1}{N}
\sum_{i=1}^{N}
\left( 
(\alpha_{i}^{\tau-1})^\top
\left(e_{i}^{\tau-1}-{x}_i+\sumDZ\right)+\frac{\rho}{2}\NML{e_{i}^{\tau-1}\!-\!{x}_i\!+\!\sumDZ}{2}^2\right)} \\\notag
&+ 
\sum_{k=1}^{K}
\left( 
(\theta_k^{\tau-1})^\top
(d_k \!-\! v_k^{\tau-1})
+\!\! \frac{\rho}{2}
\NM{d_k \!-\! v_k^{\tau-1}}{2}^2\right) 		\\
&=\arg\min_{\{d_k\}}
%\rho
%\frac{1}{N}
\sum_{i=1}^{N}
\NML{
	{x}_i\!-\!\sumDZ\!-\!e^{\tau-1}_{i}\!-\!\frac{\alpha^{\tau-1}_{i}}{\rho} }{2}^2
+
%\rho
\sum_{k=1}^{K}
\NML{
	d_k-v^{\tau-1}_k+\frac{\theta^{\tau-1}_k}{\rho} }{2}^2.
\label{eq:cscl1_filter_d}
\end{align}
Convolution is more efficient in frequency domain, we update $d_k$'s therein. 
Let $\eta^{\tau-1}_{i} = x_i-e^{\tau-1}_{i}-\frac{\alpha^{\tau-1}_{i}}{\rho}$, we transform all variables to frequency domain (denoted as symbol with hat) using FFT as $\hat{d}_k =\FFT(C^\top d_k)$, $\hat{v}_k^{\tau-1} = \FFT(C^\top v_k^{\tau-1})$, $\hat{\theta}_k^{\tau-1} = \FFT(C^\top \theta_k^{\tau-1})$, $\hat{z}_{ik} = \FFT(z_{ik})$,  
$\hat{\eta}_{i}^{\tau-1} = \FFT(\eta_{i}^{\tau-1})$
and $\hat{\alpha}_{i}^{\tau-1} = \FFT(\alpha_{i}^{\tau-1})$ where $C$ is the padding matrix used as in \eqref{eq:csc_dic_fre}.
Using these frequency-domain variables, \eqref{eq:cscl1_filter_d} can be written as: 
\begin{align*}
	\min_{\{\hat{d}_k\}}
	%\rho
%	\frac{1}{N}
	\sum_{i=1}^{N}
	\NML{
		\hat{\eta}^{\tau-1}_{i}\!-\!\sumDZfre}{2}^2
	\!\!+\!\!
	%\rho
	\sum_{k=1}^{K}
	\NML{
		\hat{d}_k\!-\!\hat{v}^{\tau-1}_k\!+\!\frac{\hat{\theta}^{\tau-1}_k}{\rho} }{2}^2\!\!.	
\end{align*} 
This can be solved by closed form solution.
Using the reordering trick used in \cite{wohlberg2016efficient,wang2018ocsc}, 
we first put all $d_k$'s in the columns of $\hat{D}=[\hat{d}_1,\dots,\hat{d}_K]$, as well as 
$\hat{V}^{\tau-1}=[\hat{v}^{\tau-1}_1,\dots,\hat{v}^{\tau-1}_K]$, $\hat{\Theta}^{\tau-1}=[\hat{\theta}^{\tau-1}_1,\dots,\hat{\theta}^{\tau-1}_K]$, and 
$\hat{Z}_{i}=[\hat{z}_{i1},\dots,\hat{z}_{iK}]$.
Then $p$th row $\hat{D}^\tau(p,:)$ is updated as
\begin{align*}
\notag
\hat{D}^\tau(p,:)\!=\!
\left(
%\!\frac{1}{N}\!
\sum_{i=1}^{N}{\eta}^{\tau-1}_{i}(p)\hat{Z}^{\star}_{i}(p,:) \!+\! \hat{V}^{\tau-1}(p,:) \!-\!\frac{\hat{\Theta}^{\tau-1}(p,:)}{\rho}\!\right)\cdot
\left(
%\!\frac{1}{N}\!
\sum_{i=1}^{N}\hat{Z}^{\top}_i(:,p)\hat{Z}^\star_i(p,:)\!+\!I\!\right )^{-1}.
\end{align*}
Then $d_k^\tau$ can be recovered as $C\iFFT(\hat{d}_k^\tau)$.

Each $e^{\tau}_{i}$ is then independently updated as
\begin{align}\notag
%\lefteqn{
	\arg&\min_{e_i}
	\|  e_{i}\|_1\!+\!
	(\alpha_{i}^{\tau-1})^\top
	\left(e_{i}-{x}_i+\sum_{k=1}^{K}d^\tau_k*z_{ik}\right)
+\!\frac{\rho}{2}\NML{e_{i}\!-\!{x}_i\!+\!\sum_{k=1}^{K}d^\tau_k*z_{ik}}{2}^2
\\\notag
&=	\arg\min_{e_i}
\|  e_{i}\|_1\!+\!
\frac{\rho}{2}\NML{
	{x}_i\!-\!\sum_{k=1}^{K}d^\tau_k*z_{ik}\!-\!e_{i}\!-\!\frac{\alpha^{\tau-1}_{i}}{\rho} }{2}^2
\end{align}
with closed-form solution for $p$th element:
\begin{equation}
e^{\tau}_{i}(p) =\text{sign}(\nu(p))
\odot\max(|\nu(p)| - \frac{1}{\rho}, 0),
\label{eq:cscl1_e}
\end{equation}
where $\nu = {x}_i-\sum_{k=1}^{K}d^{\tau}_k*z_{ik} - \frac{\alpha^{\tau-1}_{i}}{\rho}$.

$v^{\tau}_k$ is updated as
\begin{align*}
\lefteqn{
	\arg\min_{v_k}(\theta_k^{\tau-1})^\top
(d_k^\tau \!-\! v_k)+\!\! \frac{\rho}{2}
\NM{d_k^\tau \!-\! v_k}{2}^2}\\
&=\arg\min_{v_k}
\frac{\rho}{2}\NML{d_k^\tau-v_k+\frac{\theta^{\tau-1}_k}{\rho} }{2}^2.
\end{align*}
with closed-form solution 
\[ v^{\tau}_k=\frac{d^{\tau}_k+\frac{\theta^{\tau-1}_k}{\rho}}{\max(\|d^{\tau}_k+\frac{\theta^{\tau-1}_k}{\rho}\|_2,1)}. \]
%and takes $O(KP)$ time.

Finally,
$\alpha^{\tau}_{i}$ and
$\theta^{\tau}_k$ 
are updated as:
\begin{eqnarray*}
\alpha^{\tau}_{i} & = & \alpha^{\tau-1}_{i} + \rho \left(e^{\tau}_{i} - {x}_i + \sum_{k=1}^{K}d^{\tau}_k*z_{ik}\right), \\
\theta^{\tau}_k & =& \theta^{\tau-1}_{k}+\rho(d^{\tau}_k-v^{\tau}_k). 
\end{eqnarray*}

%%%%%%%%%%%%%%%%%%%%%%%%%%%%%%%%%%%%%%%%%%%%%%%%%%%%%%%%%%%%%%%%%%%%%%

\subsection{Code Update}
Given ${d}_k$'s, 
the codes ${z}_{ik}$'s for for each sample $i$
can be obtained one by one 
by 
rewriting (\ref{eq:l1}) as:
\begin{eqnarray}\notag
&\min_{e_{i},\{z_{ik},u_{ik}\}}&
%\frac{1}{2}
\| e_{i}\|_1
+ 
\sum_{k=1}^K
\beta\|u_{ik}\|_1
\\\nonumber
&\text{s.t.}&
e_{i}={x}_i-\sumDZ,\\\nonumber
&&
u_{ik} = z_{ik},\forall k, 
\end{eqnarray}
where $e_{i}$ and $u_{ik}$'s are auxiliary variables.

Using ADMM, we introduce $\alpha_{i}$ and $\lambda_{ik}'s$ as dual variables,
then the augmented Lagrangian is constructed as 
\begin{align}
\notag
\!\min_{e_{i}\!,\alpha_{i}\!,\{\!z_{ik},\!u_{ik},\!\lambda_{ik}\!\}}\!&
%\frac{1}{2}
\| e_{i}\|_1
\!+\! \alpha_{i}^\top
\left(e_{i}\!-\!{x}_i\!+\!\sumDZ\right)
\!+\!\frac{\rho}{2}\NML{e_{i}\!-\!{x}_i\!+\!\sumDZ}{2}^2
\\\notag
&\!+\!
\sum_{k=1}^{K}\!\!
\left( 
\beta\NM{u_{ik}}{1}
\!+\! 
\lambda_{ik}^\top
(z_{ik} \!-\! u_{ik})
\!+\! \frac{\rho}{2}
\NM{z_{ik} \!-\! u_{ik}}{2}^2\right)\! .
\end{align}
At $\tau$th iteration, ADMM alternately updates 
$z^\tau_{ik}$'s, $e_i^\tau$, $u^\tau_{ik}$'s, $\alpha^{\tau}_{i}$ and $\lambda^{\tau}_{ik}$'s until convergence.

$z^{\tau}_{ik}$'s are updated as
\begin{align}
\notag
\lefteqn{
\arg\min_{\{z_{ik}\}}
(\alpha_{i}^{\tau-1})^\top
\left(e_{i}^{\tau-1}\!-\!{x}_i\!+\!\sum_{k=1}^{K}d_k*z_{ik}\right)
\!+\!\frac{\rho}{2}\NML{e_{i}^{\tau-1}\!-\!{x}_i\!+\!\sum_{k=1}^{K}d_k*z_{ik}}{2}^2}\\\notag
&\!+\! \sum_{k=1}^{K}\left( 
(\lambda^{\tau-1}_{ik})^\top
(z_{ik} \!-\! u^{\tau-1}_{ik})
\!+\! \frac{\rho}{2}
\NM{z_{ik} \!-\! u^{\tau-1}_{ik}}{2}^2\right) \\\label{eq:cscl1_code_z}
&=\arg 
\min_{\{z_{ik}\}}
\NML{
	{x}_i-\sum_{k=1}^{K}d_k*z_{ik} -e^{\tau-1}_{i}-\frac{\alpha^{\tau-1}_{i}}{\rho} }{2}^2
+\sum_{k=1}^{K} 
\NML{z_{ik}-u^{\tau-1}_{ik}+\frac{\lambda^{\tau-1}_{ik}}{\rho}}{2}^2.
\end{align}
Similar to how we solve \eqref{eq:cscl1_filter_d}, we 
let $\eta^{\tau-1}_{i} = x_i-e^{\tau-1}_{i}-\frac{\alpha^{\tau-1}_{i}}{\rho}$, and transform all variables to frequency domain as
$\hat{d}_k =\FFT(C^\top d_k)$, 
$\hat{z}_{ik} = \FFT(z_{ik})$, $\hat{u}_{ik}^{\tau-1} = \FFT(u_{ik}^{\tau-1})$, $\hat{\lambda}_{ik}^{\tau-1} = \FFT(\lambda_{ik}^{\tau-1})$,  
$\hat{\eta}_{i}^{\tau-1} = \FFT(\eta_{i}^{\tau-1})$ and
$\hat{\alpha}_{i}^{\tau-1} = \FFT(\alpha_{i}^{\tau-1})$.
Then \eqref{eq:cscl1_code_z} can be written as: 
\begin{align*}
%\label{eq:wcsc_z}
\min_{\{\hat{z}_{ik}\}}
\NML{
	\hat{\eta}^{\tau-1}_{i}
	\!-\!
	\sumDZfre}{2}^2
\!+\!
\sum_{k=1}^{K}
\NML{
	\hat{z}_{ik}\!-\!\hat{u}^{\tau-1}_{ik}\!+\!\frac{\hat{\lambda}^{\tau-1}_{ik}}{\rho} }{2}^2.
\end{align*}
Using the reordering trick, we define
$\hat{D}=[\hat{d}_1,\dots,\hat{d}_K]$, 
$\hat{Z}_{i}=[\hat{z}_{i1},\dots,\hat{z}_{iK}]$, 
$\hat{U}^{\tau-1}_{i}=[\hat{u}^{\tau-1}_{i1},\dots,\hat{u}^{\tau-1}_{iK}]$,
and 
$\hat{\Lambda}^{\tau-1}_{i}=[\hat{\Lambda}^{\tau-1}_{i1},\dots,\hat{\Lambda}^{\tau-1}_{iK}]$.
Then $p$th row $\hat{Z}^{\tau}_{i}(p,:)$ is updated as
\begin{align}\notag
\hat{Z}^{\tau}_{i}(p,:)=
\left (\hat{\eta}_{i}^{\tau-1}(p)\hat{D}^{\star}(p,:) \!+\! \hat{U}^{\tau-1}_{i}(p,:) \!-\!\frac{\hat{\Lambda}^{\tau-1}_{i}(p,:)}{\rho}\right )
\cdot
\left(\hat{D}^{\top}(:,p)\hat{D}^\star(p,:)+I\right )^{-1}.
\notag
\end{align}
Then $z^\tau_{ik}$ is recovered as $\iFFT(\hat{z}^\tau_{ik})$. 

%\subsubsection{Update $\{e^{\tau}_{i}\}$:} Same as solving \eqref{eq:cscl1_e}.
%$\{e^{\tau}_{i}\}$
%update 
%is the
%same as \eqref{eq:cscl1_e}.

Each $e^{\tau}_{i}$ is then independently updated as
\begin{align}\notag
%\lefteqn{
\arg&\min_{e_i}
	\|  e_{i}\|_1\!+\!
	(\alpha_{i}^{\tau-1})^\top
	\left(e_{i}-{x}_i+\sum_{k=1}^{K}d_k*z^\tau_{ik}\right)
+\!\frac{\rho}{2}\NML{e_{i}\!-\!{x}_i\!+\!\sum_{k=1}^{K}d_k*z^\tau_{ik}}{2}^2%}
\\\notag
&=	\arg\min_{e_i}
\|  e_{i}\|_1\!+\!
\frac{\rho}{2}
\NML{
	{x}_i\!-\!\sum_{k=1}^{K}d_k*z^\tau_{ik}\!-\!e_{i}\!-\!\frac{\alpha^{\tau-1}_{i}}{\rho} }{2}^2.
\end{align}
Similar to \eqref{eq:cscl1_e}, $e^{\tau}_{i}(p)$ is updated in closed-form as
\begin{equation*}
e^{\tau}_{i}(p) =\text{sign}(\nu(p))
\odot\max(|\nu(p)| - \frac{1}{\rho}, 0),
\end{equation*}
where $\nu = {x}_i-\sum_{k=1}^{K}d_k*z^{\tau}_{ik} - \frac{\alpha^{\tau-1}_{i}}{\rho}$.

Each
$u^{\tau}_{ik}$ is updated as
\begin{align}\notag
\!\!\!
\lefteqn{\arg\min_{u_{ik}}
(\lambda^{\tau-1}_{ik})^\top
(z_{ik}^{\tau} \!-\! u_{ik})
\!+\! \frac{\rho}{2}
\NM{z_{ik}^\tau \!-\! u_{ik}}{2}^2
\!+\!\beta\|u_{ik}\|_1}\\\notag
&=\arg 
\min_{u_{ik}}
\beta\|u_{ik}\|_1\!+\!
\frac{\rho}{2}\NML{z_{ik}^\tau\!-\!u_{ik}\!+\!\frac{\lambda^{\tau-1}_{ik}}{\rho}}{2}^2.
\end{align}
with closed-form solution:
\begin{equation*}
u^{\tau}_{ik}(p) =\text{sign}(\psi(p))\odot\max(|\psi(p)| - \frac{\beta}{\rho}, 0)
\end{equation*}
where $\psi = z^{\tau}_{ik}+\frac{\lambda^{\tau-1}_{ik}}{\rho}$.
%which takes $O(KP)$ time \cite{parikh2014proximal}.

Finally,
$\alpha^{\tau}_{i}$ and
$\lambda^{\tau}_{ik}$ 
are updated as:
\begin{align*}
\alpha^{\tau}_{i}&=\alpha^{\tau-1}_{i}+\rho \left(e^{\tau}_{i} - {x}_{i} + \sum_{k=1}^{K}d_k*z^{\tau}_{ik}\right),\\
\lambda^{\tau}_{ik}&=\lambda^{\tau-1}_{ik}+ \rho (z^{\tau}_{ik}-u^{\tau}_{ik}).
\end{align*}
%$\forall i,k$;

\section{Solving \eqref{eq:wcsc_F} by BCD}
\label{app:wcsc_bcd}
As stated in the main text, we mention that solving \eqref{eq:wcsc_F} by niAPG is faster than BCD with both $d_k$'s and $z_{ik}$'s being solved by ADMM.
Here we detail how to solve \eqref{eq:wcsc_F} by BCD. 
%then provide experimental results comparing it to niAPG.

\subsection{Filter Update}
\label{app:cscl2_filter} 
Given $z_{ik}$'s, $d_k$'s are obtained by solving the following objective:
\begin{eqnarray*}
	&\min_{\{e_{gi}\},\{d_k,v_k\}}&
	\frac{1}{2}
	\sum_{i=1}^{N}
	\sum_{g=1}^{G}
	%\frac{1}{2}
	\NM{w_{gi}\odot e_{gi} }{2}^2
	\\\nonumber
	&\text{s.t.}&
	\|{v}_k\|_2^2 \le 1, \forall k, \\
	\nonumber
	&&
	e_{gi}=x_i-\sumDZ-\mu_g,  \forall i,g\\\nonumber
	&&
	v_k = d_k, \forall k.
\end{eqnarray*}
where  
$e_{gi}$'s and $v_k$'s are auxiliary variables.
This can then be solved by ADMM. 
We first form the augmented Lagrangian as
\begin{align}
\notag
\min_{\{e_{gi},\alpha_{gi}\},\{d_k,v_k,\theta_k\}}&
\frac{1}{2}
	\sum_{i=1}^{N}
\sum_{g=1}^{G}	\left( 
%\frac{1}{2}
\NM{w_{gi}\odot e_{gi} }{2}^2\!+\! \alpha_{gi}^\top
\left(e_{gi}-{x}_i\!+\!\sumDZ\!+\!\mu_{gi}\right)
\!+\!\frac{\rho}{2}\NML{e_{gi}-{x}_i\!+\!\sumDZ\!+\!\mu_{gi}}{2}^2\right)
\\\notag
&
\!+\! 
\sum_{k=1}^{K}
\left( 
\theta_k^\top
(d_k - v_k)
\!+\! \frac{\rho}{2}
\NM{d_k - v_k}{2}^2	\right) 	
\\\notag
\text{s.t.}&
\|{v}_k\|_2^2 \le 1, \forall k,
\end{align}
where 
$\rho$ is the ADMM penalty parameter, $\theta_{k}$'s and $\alpha_{gi}$'s are dual variables.
At $\tau$th iteration, ADMM alternately updates 
$d^\tau_k$'s, $e_{gi}^\tau$'s, $v^\tau_k$'s, $\alpha^{\tau}_{gi}$'s and $\theta^{\tau}_k$'s until convergence.

%===========
$d^{\tau}_k$'s are updated as
\begin{align}\notag
\lefteqn{
	\arg\min_{\{d_k\}}
	\sum_{i=1}^{N} \sum_{g=1}^G	\left( 
	(\alpha_{gi}^{\tau-1})^\top
	\left(e_{gi}^{\tau-1}\!-\!{x}_i\!+\!\sumDZ\!+\!\mu_{gi}\right)
+\!\frac{\rho}{2}\NML{e_{gi}^{\tau-1}\!-\!{x}_i\!+\!\sumDZ\!+\!\mu_{gi}}{2}^2\right)} \\\notag
&\!+\! 
\sum_{k=1}^{K}
\left( 
(\theta_k^{\tau-1})^\top
(d_k \!-\! v_k^{\tau-1})
+\!\! \frac{\rho}{2}
\NM{d_k \!-\! v_k^{\tau-1}}{2}^2\right) 		\\\label{eq:wcsc_admm_d}
&=\arg\min_{\{d_k\}}
\sum_{i=1}^{N}
\sum_{g=1}^{G}
\NML{
	x_i\!-\!\sumDZ\!-\!\mu_g\!-\!e^{\tau-1}_{gi}\!-\!\frac{\alpha^{\tau-1}_{gi}}{\rho} }{2}^2
\!+\!
%\rho
\sum_{k=1}^{K}
\NML{
	d_k\!-\!v^{\tau-1}_k\!+\!\frac{\theta^{\tau-1}_k}{\rho} }{2}^2.
\end{align} 
%Then it takes $O(K^2P+KP\log P)$ time to update $\hat{D}$.
Since convolution is more efficient in frequency domain, we update $d_k$'s therein. 
Let $\eta^{\tau-1}_{gi} = x_i-\mu_g-e^{\tau-1}_{gi}-\frac{\alpha^{\tau-1}_{gi}}{\rho} $, and transform all variables to frequency domain (denoted as symbol with hat) using FFT as
 $\hat{d}_k =\FFT(C^\top d_k)$, $\hat{v}_k^{\tau-1} = \FFT(C^\top v_k^{\tau-1})$, $\hat{\theta}_k^{\tau-1} = \FFT(C^\top \theta_k^{\tau-1})$, 
 $\hat{z}_{ik} = \FFT(z_{ik})$,  
 $\hat{\eta}_{gi}^{\tau-1} = \FFT(\eta_{gi}^{\tau-1})$
 and $\hat{\alpha}_{gi}^{\tau-1} = \FFT(\alpha_{gi}^{\tau-1})$.
Using these frequency-domain variables, 
\eqref{eq:wcsc_admm_d} can be written as: 
\begin{align*}
	\min_{\{\hat{d}_k\}}
	%&&
	%\rho
	\sum_{i=1}^{N}
	\sum_{g=1}^G
	\NML{
		\hat{\eta}^{\tau-1}_{gi}\!\!-\!\!\sumDZfre}{2}^2
%	\\&&
	\!\!+\!
	%\rho
	\sum_{k=1}^{K}
	\NML{
		\hat{d}_k\!-\!\hat{v}^{\tau-1}_k\!+\!\frac{\theta^{\tau-1}_k}{\rho} }{2}^2.
\end{align*}
This can be solved by closed form solution. 
Using the reordering trick used in \cite{wohlberg2016efficient,wang2018ocsc}, 
we first put all $d_k$'s in the columns of $\hat{D}=[\hat{d}_1,\dots,\hat{d}_K]$, as well as 
$\hat{V}^{\tau-1}=[\hat{v}^{\tau-1}_1,\dots,\hat{v}^{\tau-1}_K]$, $\hat{\Theta}^{\tau-1}=[\hat{\theta}^{\tau-1}_1,\dots,\hat{\theta}^{\tau-1}_K]$, and 
$\hat{Z}_{i}=[\hat{z}_{i1},\dots,\hat{z}_{iK}]$.
Then $p$th row $\hat{D}^\tau(p,:)$ is updated as
\begin{align*}
\notag
\hat{D}^\tau(p,:)=
\left(\sum_{i=1}^{N}\zeta(p)\hat{Z}^{\star}_{i}(p,:) \!+\! \hat{V}^{\tau-1}(p,:) \!-\!\hat{\Theta}^{\tau-1}(p,:)\right)
\cdot
\left(G\sum_{i=1}^{N}\hat{Z}^{\top}_i(:,p)\hat{Z}^\star_i(p,:)\!+\!I\right )^{-1},
\label{eq:wcsc_d}
\end{align*}
where $\zeta=\sum_{g=1}^G\hat{\eta}^{\tau-1}_{gi}$.
%It takes $O(K^2P)$ time to update $\hat{D}$.
$d_k^\tau$ can be recovered as $C\iFFT(\hat{d}_k^\tau)$.

Each $e^{\tau}_{gi}$ is independently updated as
\begin{align}\notag
%\lefteqn{
	\arg&\min_{e_{gi}}
	\frac{1}{2}
	\NM{w_{gi}\odot e_{gi} }{2}^2
	\!+\!
	(\alpha_{gi}^{\tau-1})^\top
	\left(e_{gi}\!-\!{x}_i\!+\!\sum_{k=1}^{K}d^\tau_k*z_{ik}\!+\!\mu_g\right)
\!+\!\frac{\rho}{2}\NML{e_{gi}\!-\!{x}_i\!+\!\sum_{k=1}^{K}d^\tau_k*z_{ik}+\mu_g}{2}^2
\\\notag
&=	\arg\min_{e_{gi}}
\NM{w_{gi}\odot e_{gi} }{2}^2
\!+\!\rho 
\NML{
	{x}_i\!-\!\sum_{k=1}^{K}d^\tau_k*z_{ik}\!-\!\mu_g\!-\!e_{gi}\!-\!\frac{\alpha^{\tau-1}_{gi}}{\rho} }{2}^2,
\end{align}
with closed-form solution for $p$th element:
\begin{equation}
e^{\tau}_{gi}(p)= \frac{\rho \nu(p)}{w^2_{gi}(p)+\rho},
\label{eq:wcsc_admm_e}
\end{equation}
where $\nu = {x}_i-\sum_{k=1}^{K}d^{\tau}_k*z_{ik} -\mu_g- \frac{\alpha^{\tau-1}_{gi}}{\rho}$.

$v^{\tau}_k$ is updated as
\begin{align*}
\lefteqn{
	\arg\min_{v_k}(\theta_k^{\tau-1})^\top
	(d_k^\tau \!-\! v_k)+\!\! \frac{\rho}{2}
	\NM{d_k^\tau \!-\! v_k}{2}^2}\\
&=\arg\min_{v_k}
\frac{1}{2}
\NML{d_k^\tau-v_k+\frac{\theta^{\tau-1}_k}{\rho} }{2}^2.
\end{align*}
with closed-form solution 
\[ v^{\tau}_k=\frac{d^{\tau}_k+\frac{\theta^{\tau-1}_k}{\rho}}{\max(\|d^{\tau}_k+\frac{\theta^{\tau-1}_k}{\rho}\|_2,1)}. \]
%and takes $O(KP)$ time.

Finally,
$\alpha^{\tau}_{gi}$ and
$\theta^{\tau}_k$ 
are updated as:
\begin{eqnarray*}
	\alpha^{\tau}_{gi} & = & \alpha^{\tau-1}_{gi} + \rho \left(e^{\tau}_{gi} - {x}_i + \sum_{k=1}^{K}d^{\tau}_k*z_{ik} +\mu_g\right), \\
	\theta^{\tau}_k & =& \theta^{\tau-1}_{k}+\rho(d^{\tau}_k-v^{\tau}_k). 
\end{eqnarray*}

%-----------------------------------------------------------
\subsection{Code Update}
\label{app:cscl2_z}
In the codes update subproblem, $z_{ik}$'s can be independently updated for each $i$.
The objective to solve is: 
\begin{eqnarray*}
	\min_{\{z_{ik},u_{ik}\},\{e_{gi}\}} 
	&&
	\frac{1}{2}
	\sum_{g=1}^{G}
	\NM{w_{gi}\odot e_{gi} }{2}^2
	+\beta\sum_{k=1}^{K}\|u_{ik}\|_1
	,\\
	\text{s.t.}
	&&~e_{gi}=x_i-\sumDZ-\mu_g,  \forall g,\\
	&&~u_{ik}={z}_{ik}, \forall k.
\end{eqnarray*}
where $e_{gi}$'s and $u_{ik}$'s are auxiliary variables.

Using ADMM, we introduce $\alpha_{gi}$'s and $\lambda_{ik}$'s as dual variables,
then the augmented Lagrangian is constructed as 
\begin{align}
\notag
\min_{\{\!e_{gi}\!,\alpha_{gi}\!\},\!\{\!z_{ik}\!,u_{ik}\!,\lambda_{ik}\!\}}&
\frac{1}{2}
\sum_{g=1}^{G}\left( 
\NM{w_{gi}\odot e_{gi} }{2}^2
\!\!+\!\! \alpha_{gi}^\top
\left(e_{gi}\!\!-\!\!{x}_i\!\!+\!\!\sumDZ\!\!+\!\!\mu_g\right)
\!\!+\!\!\frac{\rho}{2}\NML{e_{gi}\!\!-\!\!{x}_i\!\!+\!\!\sumDZ\!\!+\!\!\mu_g}{2}^2\right) \\\notag
&\!\!+\!\!
\sum_{k=1}^{K}\!\!
\left( 
\beta\NM{u_{ik}}{1}
\!\!+\!\! 
\lambda_{ik}^\top
(z_{ik} \!\!-\!\! u_{ik})
\!\!+\!\! \frac{\rho}{2}
\NM{z_{ik} \!\!-\!\! u_{ik}}{2}^2 \!\right) \!\!.
\end{align}
At $\tau$th iteration, ADMM alternately updates 
$z^\tau_{ik}$'s, $e_{gi}^\tau$'s, $u^\tau_{ik}$'s, $\alpha^{\tau}_{gi}$'s and $\lambda^{\tau}_{ik}$'s until convergence.

%=============================================

$z^{\tau}_{ik}$'s are updated as
\begin{align}
\notag
\lefteqn{
	\arg\min_{\{z_{ik}\}}
	\sum_{g=1}^G\left( 
	(\alpha_{gi}^{\tau-1})^\top
	\left(e_{gi}^{\tau-1}\!-\!{x}_i\!+\!\sum_{k=1}^{K}d_k*z_{ik}+\mu_g\right)
\!+\!\frac{\rho}{2}\NML{e_{gi}^{\tau-1}\!-\!{x}_i\!+\!\sum_{k=1}^{K}d_k*z_{ik}+\mu_g}{2}^2\right) }
\\\notag
&
+ \sum_{k=1}^{K}\left( 
(\lambda^{\tau-1}_{ik})^\top
(z_{ik} \!-\! u^{\tau-1}_{ik})
\!+\! \frac{\rho}{2}
\NM{z_{ik} \!-\! u^{\tau-1}_{ik}}{2}^2\right) \\\label{eq:wcsc_admm_z}
&=\arg 
\min_{\{z_{ik}\}}
\sum_{g=1}^G
\NML{
	{x}_i-\sum_{k=1}^{K}d_k*z_{ik} -\mu_g-e^{\tau-1}_{gi}-\frac{\alpha^{\tau-1}_{gi}}{\rho} }{2}^2
+\sum_{k=1}^{K} 
\NML{z_{ik}-u^{\tau-1}_{ik}+\frac{\lambda^{\tau-1}_{ik}}{\rho}}{2}^2.
\end{align}
Let $\eta^{\tau-1}_{gi} = x_i-\mu_g-e^{\tau-1}_{gi}-\frac{\alpha^{\tau-1}_{gi}}{\rho}$, and transform all variables to frequency domain as
$\hat{d}_k =\FFT(C^\top d_k)$, 
$\hat{z}_{ik} = \FFT(z_{ik})$, $\hat{u}_{ik}^{\tau-1} = \FFT(u_{ik}^{\tau-1})$, $\hat{\lambda}_{ik}^{\tau-1} = \FFT(\lambda_{ik}^{\tau-1})$,  
$\hat{\eta}_{gi}^{\tau-1} = \FFT(\eta_{gi}^{\tau-1})$ and
$\hat{\alpha}_{gi}^{\tau-1} = \FFT(\alpha_{gi}^{\tau-1})$.
Then \eqref{eq:wcsc_admm_z} can be written as: 
\begin{align*}
%\label{eq:wcsc_z}
\min_{\{\hat{z}_{ik}\}}
\sum_{g=1}^G
\NML{
	\hat{\eta}^{\tau-1}_{gi}
	\!-\!
	\sumDZfre}{2}^2
\!+\!
\sum_{k=1}^{K}
\NML{
	\hat{z}_{ik}\!-\!\hat{u}^{\tau-1}_{ik}\!+\!\frac{\hat{\lambda}^{\tau-1}_{ik}}{\rho} }{2}^2.
\end{align*}
Using the reordering trick, we define
$\hat{D}=[\hat{d}_1,\dots,\hat{d}_K]$, 
$\hat{Z}_{i}=[\hat{z}_{i1},\dots,\hat{z}_{iK}]$, 
$\hat{U}^{\tau-1}_{i}=[\hat{u}^{\tau-1}_{i1},\dots,\hat{u}^{\tau-1}_{iK}]$,
and 
$\hat{\Lambda}^{\tau-1}_{i}=[\hat{\Lambda}^{\tau-1}_{i1},\dots,\hat{\Lambda}^{\tau-1}_{iK}]$.
Then $p$th row $\hat{Z}^{\tau}_{i}(p,:)$ is updated as
\begin{align*}
\hat{Z}^{\tau}_{i}(p,:)=
\left(\zeta(p)\hat{D}^{\star}(p,:) \!+\! \hat{U}^{\tau-1}_{i}(p,:) \!-\!\frac{\hat{\Lambda}^{\tau-1}_{i}(p,:)}{\rho}\right)
\cdot
(G\hat{D}^{\top}(:,p)\hat{D}^\star(p,:)+I)^{-1},
\end{align*}
where $\zeta=\sum_{g=1}^G\hat{\eta}_{gi}^{\tau-1}(p)$.
$z^\tau_{ik}$ is recovered as $\iFFT(\hat{z}^\tau_{ik})$. 

%$\{e^{\tau}_{gi}\}$
%update 
%is the
%same as \eqref{eq:wcsc_admm_e}.

Each $e^{\tau}_{gi}$ is independently updated as
\begin{align}\notag
%\lefteqn{
	\arg&\min_{e_{gi}}
	\frac{1}{2}
	\NM{w_{gi}\odot e_{gi} }{2}^2	+
(\alpha_{gi}^{\tau-1})^\top
\left(e_{gi}-{x}_i+\sum_{k=1}^{K}d_k*z^\tau_{ik}+\mu_g\right)
+\!\frac{\rho}{2}\NML{e_{gi}\!-\!{x}_i\!+\!\sum_{k=1}^{K}d_k*z^\tau_{ik}+\mu_g}{2}^2
\\\notag
&=	\arg\min_{e_{gi}}
\NM{w_{gi}\odot e_{gi} }{2}^2+\rho 
\NML{
	{x}_i\!-\!\sum_{k=1}^{K}d_k*z^\tau_{ik}-\mu_g\!-\!e_{gi}\!-\!\frac{\alpha^{\tau-1}_{gi}}{\rho} }{2}^2.
\end{align}
Then similar to \eqref{eq:wcsc_admm_e}, $e^{\tau}_{gi}(p)$ is updated as 
\begin{equation*}
e^{\tau}_{gi}(p)= \frac{\rho \nu(p)}{w^2_{gi}(p)+\rho},
\end{equation*}
where $\nu = {x}_i-\sum_{k=1}^{K}d_k*z^\tau_{ik} -\mu_g- \frac{\alpha^{\tau-1}_{gi}}{\rho}$.

Each $u^{\tau}_{ik}$ is updated as
\begin{align}\notag
\!\!\lefteqn{\arg\min_{u_{ik}}
	(\lambda^{\tau-1}_{ik})^\top
	(z_{ik}^{\tau} \!-\! u_{ik})
	\!+\! \frac{\rho}{2}
	\NM{z_{ik}^\tau \!-\! u_{ik}}{2}^2
\!+\!\beta\|u_{ik}\|_1}\\\notag
&=\arg 
\min_{u_{ik}}
\beta\|u_{ik}\|_1\!+\!
\frac{\rho}{2}
\NML{z_{ik}^\tau\!-\!u_{ik}\!+\!\frac{\lambda^{\tau-1}_{ik}}{\rho}}{2}^2.
\end{align}
with closed-form solution:
\begin{equation*}\label{eq:cscl1_code_u_sol}
u^{\tau}_{ik}(p) =\text{sign}(\psi(p))\odot\max(|\psi(p)| - \frac{\beta}{\rho}, 0)
\end{equation*}
where $\psi = z^{\tau}_{ik}+\frac{\lambda^{\tau-1}_{ik}}{\rho}$.
%which takes $O(KP)$ time \cite{parikh2014proximal}.

Finally,
$\alpha^{\tau}_{gi}$ and
$\lambda^{\tau}_{ik}$ 
are updated as:
\begin{align*}
\alpha^{\tau}_{gi} &=\alpha^{\tau-1}_{gi}+\rho \left(e^{\tau}_{gi} - {x}_{i} + \sum_{k=1}^{K}d_k*z^{\tau}_{ik}+\mu_g \right),\\
\lambda^{\tau}_{ik}&= \lambda^{\tau-1}_{ik}+ \rho (z^{\tau}_{ik}-u^{\tau}_{ik}).
\end{align*}
%$\forall i,k$;

\end{document}